\newcommand\at[2]{\left.#1\right|_{#2}}
\theoremstyle{plain}
\newtheorem{theorem}{Theorem}[section]
\newtheorem{proposition}[theorem]{Proposition}
\newtheorem{lemma}[theorem]{Lemma}
\theoremstyle{definition}
\theoremstyle{remark}
\icmltitlerunning{Adaptive Annealed Importance Sampling with Constant Rate Progress}
\begin{document}

\twocolumn[
\icmltitle{Adaptive Annealed Importance Sampling with Constant Rate Progress}



\icmlsetsymbol{equal}{*}

\begin{icmlauthorlist}
\icmlauthor{Shirin Goshtasbpour}{eth,sdsc}
\icmlauthor{Victor Cohen}{sdsc}
\icmlauthor{Fernando Perez-Cruz}{eth,sdsc}
\end{icmlauthorlist}

\icmlaffiliation{eth}{Computer Science Department, ETH Zurich, Zurich, Switzerland}
\icmlaffiliation{sdsc}{Swiss Data Science Center, Zurich, Switzerland}

\icmlcorrespondingauthor{Shirin Goshtasbpour}{shirin.goshtasbpour@inf.ethz.ch}

\icmlkeywords{Normalization Factor Estimation, Annealed Importance Sampling}

\vskip 0.3in
]



\printAffiliationsAndNotice{\icmlEqualContribution} 

\begin{abstract}
    Annealed Importance Sampling (AIS) synthesizes weighted samples from an intractable distribution given its unnormalized density function. This algorithm relies on a sequence of interpolating distributions bridging the target to an initial tractable distribution such as the well-known geometric mean path of unnormalized distributions which is assumed to be suboptimal in general. In this paper, we prove that the geometric annealing corresponds to the distribution path that minimizes the KL divergence between the current particle distribution and the desired target when the feasible change in the particle distribution is constrained. Following this observation, we derive the constant rate discretization schedule for this annealing sequence, which adjusts the schedule to the difficulty of moving samples between the initial and the target distributions. We further extend our results to $f$-divergences and present the respective dynamics of annealing sequences based on which we propose the Constant Rate AIS (CR-AIS) algorithm and its efficient implementation for $\alpha$-divergences. We empirically show that CR-AIS performs well on multiple benchmark distributions while avoiding the computationally expensive tuning loop in existing Adaptive AIS.

\end{abstract}

\section{Introduction}
\label{sec:intro}

Annealed Importance Sampling (AIS) \citep{neal2001annealed} is one of the most popular sampling methods to estimate intractable expectations given an unnormalized density of a distribution. Together with its other variants such as thermodynamic integration \citep{ogata1989monte, gelman1998simulating} and Sequential Monte Carlo (SMC) \citep{del2006sequential} this algorithm has vast applications, such as marginal likelihood estimation \citep{salakhutdinov2008quantitative, grosse2015sandwiching, grosse2016measuring}, moment estimation \citep{johansen2015towards, jasra2011inference}, generative model evaluation \citep{wu2017on} and more recently it has been incorporated in variational inference and training of deep generative networks \citep{maddison2017filtering, naesseth2018variational, wu2020stochastic,thin2021monte, masrani2019thermodynamic}.

To perform annealing, this algorithm uses a sequence of bridging distributions between proposal distribution and the target which is chosen in advance. \citet{gelman1998simulating} have demonstrated that the optimal path with lowest variance for thermodynamic integral estimator depends on the Hellinger distance of the distributions and it is intractable in complex setups. Instead, the geometric mean path has been utilized for years \citep{neal2001annealed, neal1996sampling}. As alternatives, moment-averaging have been proposed for exponential family distributions \citep{grosse2013annealing} and further generalized to power mean for arbitrary endpoint distributions \citep{masrani2021q, brekelmans2020annealed}. These heuristic annealing paths achieve viable estimation results even though they are considered to be suboptimal. 

In this work, we analyze a version of AIS algorithm where we apply infinitesimal changes to the initial density along the annealing distribution path to get to the target distribution. We take a greedy approach and modify the particle distribution in the direction that optimally reduces the remaining estimation bias at every instance. The remaining bias is equivalent to the inverse KL-divergence between current particle distribution and the target distribution under common assumptions \cite{grosse2013annealing} and we prove that in this setup the optimal greedy strategy is achieved using the geometric mean path. Extending our analysis to the larger class of $f$-divergences, we are able to derive an Ordinary Differential Equation (ODE) for the optimal greedy annealing dynamics. In the subclass of $\alpha$-divergences, this ODE has a closed form solution and we show that power mean annealing is a solution to this equation. 

While other variational representations of geometric and power mean paths have been provided in the previous work \citep{grosse2013annealing, masrani2021q}, to the best of our knowledge, we are the first to show the relation of these annealing sequences with the functional derivatives of various probability divergences. Using this framework, we are able to derive the constant rate schedule along the steepest descent annealing path. 

Our greedy strategy is similar to existing Adaptive AIS in that we look ahead and measure the impact of each annealing step base on an objective function. However, in a typical Adaptive AIS algorithm, the schedule is adjusted in each step to keep the reduction of the Effective Sample Size (ESS) \cite{kong1992note,neal2001annealed} or the Conditional ESS (CESS) \cite{johansen2015towards} at a constant rate using iterative search algorithms. Instead, we derive the annealing distribution path and its corresponding schedule using the same objective derivative. Therefore, the constant rate schedule is tightly connected to the bridging distributions and is able to account for the difficulty of synthesis along each annealing sequence. 

Finally, we design the Constant Rate AIS (CR-AIS) algorithm to approximate the constant rate schedule of the variational objectives. We present multiple considerations for its practical implementation. CR-AIS does not rely on searching algorithms and excessive target density function evaluations as in adaptive versions of AIS and uses the information from the derivative of the objective to choose the bridging distributions. Using this algorithm we empirically verify our findings on high dimensional targets and illustrate how CR-AIS is able to trade-off computation complexity with estimation accuracy while improving adaptivity.

\section{Annealed Importance Sampling}
\label{sec:aisbg}

Formally, suppose $P$ and $Q_0$ are two probability distributions on $\R^d$ with density functions $\pi$ and $q_0$, respectively. We assume evaluation of $q_0$ is tractable while $\pi=\tilde \pi / Z_\pi$ is only known up to the normalization constant $Z_\pi = \int_{\R^d} \tilde \pi(z)dz$. To sample from $\pi$, AIS uses a sequence of annealing distributions defined by the density path $\gamma : [0,1]\times \R^d \to \R_+$ where $\gamma(t, \cdot) \in\mathcal P$ for all $t\in[0,1]$ starting from $\gamma(0,\cdot) = q_0(\cdot)$ and reaching to $\gamma(1,\cdot) = \pi(\cdot)$ and $\mathcal P$ is the family of normalized density functions. This path is discretized with the schedule $0=t_0 < ... < t_M=1$. Common choices for schedule are linear discretization with $t_i = i / M$, exponential with $t_i = 1 - \eps^i$ and sigmoidal with $t_i = \sigma(c(i/M -0.5))$ for hyperparameters $\eps < 1$, $0 < c$ and $\sigma(x) = 1/(1 + e^{-x})$. 

A Markov process $\qr(z_{0:M}) = q_0(z_0)\prod_{i\in[M]}\qr_i(z_i|z_{i-1})$ is used for sampling such that particles sampled from the initial distribution, $z_0\sim q_0$, gradually move following each transition probability $\qr_i$ to have a marginal distribution close to $\gamma(t_i,\cdot)$.  An auxiliary backward Markov chain $\ql_i(z_{i-1}|z_i)$ allows us to compute the unnormalized importance weights corresponding to each particle trajectory $z_{0:M}$,
\begin{align}
    w(z_{0:M}) = \frac{\tilde\pi(z_M) \prod_i \ql_i(z_{i-1}|z_i)}{q_0(z_0)\prod_i\qr_i(z_i|z_{i-1})}.\nonumber
\end{align}
It is common to define the transition probabilities $\qr_i$ to be reversible with respect to $\gamma(t_i,\cdot)$ (e.g. a Markov chain with Metropolis-Hastings corrected transition kernels) and select $\ql_i$ as its reversal \citep{neal2001annealed}. Let us denote the unnormalized density path with $\tilde \gamma(t, \cdot) = Z_{\gamma_{t}}\cdot\gamma(t, \cdot)$, where $Z_{\gamma_t} = \int_{\R^d} \tilde \gamma(t, z)dz$. Therefore, with reversible transitions, $\ql_i(z_{i-1}|z_i) = \qr_i(z_i|z_{i-1})\tilde \gamma(t_i,z_{i-1}) / \tilde \gamma(t_i, z_i)$, we can rewrite the importance weights as $w(z_{0:M}) = \prod_i \tilde \gamma(t_i, z_{i-1})/\tilde \gamma(t_{i-1}, z_{i-1})$.
The unbiased Monte Carlo (MC) estimator of the partition function 
from $N$ sampled particle instances $z_{0:M}^{(j)}\sim q_0\prod_i \qr_i$ for $1\leq j \leq N$ is 
\begin{align}
    Z_\pi \approx \hat Z_\pi=\frac{1}{N} \sum_j w(z_{0:M}^{(j)}). \nonumber
\end{align}
Although, to avoid numerical underflow, log space computations are preferred and $\log Z_\pi$ is bounded from below by \citep{grosse2015sandwiching, domke2018importance}
\begin{align}
    \log Z_\pi \geq &\mathbb E_{\qr}[\log w(z_{0:M})]\label{eq:Zelbo}\\
    &\approx\frac{1}{N}\sum_j \log w(z_{0:M}^{(j)}).\nonumber
\end{align}
It is possible to estimate the expectation of a test function $h:\R^d\to\R$ under the target distribution via self-normalized weighted average of $h(z_M^{(j)})$ or equivalently by resampling the particles according to a multinomial distribution where $z_M^{(j)}$ is sampled proportional to $w(z_{0:M}^{(j)})$.

\section{Adaptive Annealing Dynamics from Divergence Derivative}
\label{sec:annealingdyn}

Given the particle trajectory, $z_{0:M}\sim \qr$, we denote the density of the marginal distribution of $z_i$ with $q_{t_i}(z_i)$. It is common to analyze the AIS algorithm in perfect transition regime (i.e. when $q_{t_i}(z) = \gamma(t_i,z)$ $q_{t_i}$-a.s.) with reversible transition kernels \cite{grosse2013annealing,kiwaki2015variational}. This is not  unrealistic if $t_i - t_{i-1}$ is sufficiently small and the consecutive annealing distributions are close to each other. We assume the same conditions apply in our paper. Under this setup, \citet{grosse2013annealing} decomposed the bias of the estimator in \Cref{eq:Zelbo} as the sum of KL divergences between consecutive annealing distributions \citep{grosse2013annealing},
\begin{align}\label{eq:cost}
    \log Z_\pi - \mathbb E_{\qr}[\log w(z_{0:M})] = \sum_{i=1}^M \text{D}_\text{KL}(q_{t_{i-1}} || q_{t_i}),
\end{align}
where $\text{D}_\text{KL}(q||p) = \int q(z) \log (q(z)/p(z))dz$.
In fact, as $M\to\infty$ the asymptotic bias decreases as 
\begin{align}\label{eq:asym}
    M \sum_{i=1}^M \text{D}_\text{KL}\left(q_{t_{i-1}}||q_{t_i}\right) \to \frac{1}{2}\int_0^1 \text{Var}_{q_t}[\frac{d}{dt}\log q_t]dt.
\end{align}

The end-to-end asymptotic bias in \Cref{eq:asym} was used to compare the efficiency of moment-averaging and geometric mean paths in \citep{grosse2013annealing} and optimized with respect to the single dimensional schedule function for a given density path $\tilde \gamma$ in \citep{kiwaki2015variational}. Their method does not scale to higher dimensional function space to optimize the annealing density path. Instead, we propose an adaptive approach to maximize the reduction in bias with every infinitesimal transition and we use the derivative of KL-divergence at the current particle distribution to find the optimal change in the annealing density path. By doing so, we are able to extend the optimization space from the space of discretization schedules to the space of unnormalized annealing density paths. In the following, we explain the details of our method. Proof of the results are provided in \Cref{sec:app:proofs}.

\subsection{Inverse KL Divergence Dynamics}
\label{sec:kldyn}

Let $\tilde q_t(z) = \tilde \gamma(t,z)$ be the unnormalized marginal density at instant $t$ and $J_\text{KL}[\phi_t]$ define the functional of $\phi_t(z) = \log \tilde q_t(z)$ corresponding to the inverse KL divergence,
\begin{align}
    J_\text{KL}[\phi_t] = &\text{D}_\text{KL}(q_t||\pi) \label{eq:kl}\\
    = &\frac{\int \tilde q_t(z) \log \frac{\tilde q_t(z)}{\tilde \pi(z)}dz}{\int \tilde q_t(z)dz}    &+ \log Z_\pi - \log \int \tilde q_t(z)dz.\nonumber
\end{align}

In our greedy strategy, at step $i$, we fix all the previous annealing distributions up to $q_{t_i}$ and consider none of the subsequent ones other than $q_1=\pi$. We choose the next distribution $q_{t_{i+1}}$ as an infinitesimal modification of $q_{t_i}$ which minimizes the updated sampling bias. We can derive the updated bias recursively from \Cref{eq:cost} starting from $b_0 = \text{D}_\text{KL}(q_0||\pi)$ and repeating 
\begin{align}\label{eq:bi}
    b_i=&b_{i-1} + \text{D}_\text{KL}(q_{t_i}||q_{t_{i+1}}) \nonumber\\
    &+ \text{D}_\text{KL}(q_{t_{i+1}}||\pi) - \text{D}_\text{KL}(q_{t_i}||\pi),
\end{align}
for $i < M$. To find $q_{t_{i+1}}$ we take the directional functional derivative of $b_i$ and minimize it in a compact space to find the steepest descent direction that leads to the optimal annealing. As the first and last terms in \Cref{eq:bi} are constant with respect to $q_{t_{i+1}}$ and the derivative of the second term is zero, the directional derivative is equivalent to the directional derivative of $J_\text{KL}[\phi_{t_i}]$. 
First, we derive the directional derivative of $J_\text{KL}$ in the following Lemma. Then, in \Cref{theo:geom-steepest}, we show that the geometric path,
\begin{align}\label{eq:geom}
    \log \tilde q_t^\text{geom}(z) \coloneqq (1 - t) \log q_0(z) + t \log \tilde \pi(z),
\end{align}
corresponds to the path following this direction and is optimal in this sense. 

To take the derivative of $J_\text{KL}$, we transform the negative energy $\phi_{t}$ by a small perturbation in direction of the smooth function $\eta:\R^d\to\R$ with a small step size $\eps>0$. 
\begin{lemma}
\label{lem:kl-derivative}
    Assume $\tilde q_t(z)$ and $\tilde \pi(z)$ are positive unnormalized density functions and let $\phi_{t+\eps}(z) = \phi_t(z) + \eps \eta(z)$ for $\phi_t(z) = \log \tilde q_t(z)$. Then we have,
    \begin{align}\label{eq:kl-derivative}
        \at{\frac{d}{d\eps}J_\textup{KL}[\phi_{t+\eps}]}{\eps=0} = &\textup{Cov}_{q_t}\left[\eta(z), \log \frac{\tilde q_t(z)}{\tilde \pi(z)}\right],
    \end{align}
    where $\textup{Cov}_q[\cdot,\cdot]$ is the covariance under distribution of $q$ and we use the definition of G\^{a}teaux differential for the derivative,
    \begin{align}
        \at{\frac{d}{d\eps}J_\textup{KL}[\phi_{t+\eps}]}{\eps=0} = &\lim_{\eps \to 0^+} \frac{J_\textup{KL}[\phi_t(z) + \eps \eta(z)] - J_\textup{KL}[\phi_t]}{\eps}. \nonumber
    \end{align}
\end{lemma}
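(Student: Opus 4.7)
The plan is a direct computation: expand $J_{\textup{KL}}[\phi_{t+\eps}]$ in $\eps$, differentiate each component under the integral, and check that the contributions involving $\mathbb{E}_{q_t}[\eta]$ cancel, leaving a covariance. Because the perturbation changes both the unnormalized density and its normalizer, three derivatives appear, and the cancellation between two of them is the one step worth verifying carefully.

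First I would translate the additive perturbation of the log density into a multiplicative perturbation of the unnormalized density: since $\phi_{t+\eps}(z) = \phi_t(z) + \eps\eta(z)$, we have $\tilde q_{t+\eps}(z) = \tilde q_t(z)\,e^{\eps\eta(z)}$. Setting $Z_t(\eps) \coloneqq \int \tilde q_t(z)e^{\eps\eta(z)}\,dz$ and $N(\eps) \coloneqq \int \tilde q_t(z)e^{\eps\eta(z)}\bigl(\phi_t(z) + \eps\eta(z) - \log\tilde\pi(z)\bigr)\,dz$, the functional becomes
\[
J_{\textup{KL}}[\phi_{t+\eps}] = \frac{N(\eps)}{Z_t(\eps)} + \log Z_\pi - \log Z_t(\eps).
\]

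Next I would differentiate under the integral (justified by positivity of $\tilde q_t$ and smoothness of $\eta$, with the integrability conditions implicit in the lemma) to obtain $Z_t'(0) = Z_t\,\mathbb{E}_{q_t}[\eta]$ and $N'(0) = Z_t\bigl(\mathbb{E}_{q_t}[\eta(\phi_t - \log\tilde\pi)] + \mathbb{E}_{q_t}[\eta]\bigr)$. Applying the quotient rule, together with $N(0)/Z_t = \mathbb{E}_{q_t}[\phi_t - \log\tilde\pi]$, yields
\begin{align*}
\left.\frac{d}{d\eps}\frac{N(\eps)}{Z_t(\eps)}\right|_{\eps=0} &= \mathbb{E}_{q_t}[\eta(\phi_t - \log\tilde\pi)] + \mathbb{E}_{q_t}[\eta] \\
&\quad - \mathbb{E}_{q_t}[\eta]\,\mathbb{E}_{q_t}[\phi_t - \log\tilde\pi],
\end{align*}
while the final piece contributes $-\mathbb{E}_{q_t}[\eta]$ from the derivative of $-\log Z_t(\eps)$.

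Finally, the stray $+\mathbb{E}_{q_t}[\eta]$ from the quotient rule cancels the $-\mathbb{E}_{q_t}[\eta]$ from $-\log Z_t(\eps)$, leaving $\mathbb{E}_{q_t}[\eta(\phi_t - \log\tilde\pi)] - \mathbb{E}_{q_t}[\eta]\,\mathbb{E}_{q_t}[\phi_t - \log\tilde\pi] = \textup{Cov}_{q_t}[\eta,\,\log(\tilde q_t/\tilde\pi)]$, which is the claim. The main thing to check is exactly this cancellation; it is the analytic counterpart of the fact that $J_{\textup{KL}}$ depends only on the normalized $q_t$ and is invariant under the rescaling $\tilde q_t \mapsto c\,\tilde q_t$. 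Equivalently, the G\^ateaux derivative in a constant direction $\eta \equiv c$ must vanish — a property the covariance enjoys but the bare expectation $\mathbb{E}_{q_t}[\eta(\phi_t - \log\tilde\pi)]$ does not, which is what forces the two $\mathbb{E}_{q_t}[\eta]$ contributions to cancel.
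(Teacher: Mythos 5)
Your proposal is correct and follows essentially the same route as the paper: both differentiate $J_{\textup{KL}}$ directly under the integral, apply the quotient rule to the normalized expectation, and observe that the two $\mathbb{E}_{q_t}[\eta]$ terms (one from differentiating $\eps\eta$ inside the log-ratio, one from the $-\log\int\tilde q_t$ normalizer) cancel, leaving the covariance. Your added remark that this cancellation reflects invariance of $J_{\textup{KL}}$ under rescaling $\tilde q_t \mapsto c\,\tilde q_t$ is a nice sanity check but not a substantive departure from the paper's computation.
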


To identify the optimal perturbation direction, $\eta^*_{q_t,\pi}$, we minimize $\text{Cov}_{q_t}\left[\eta(z),\log \frac{\tilde q_t(z)}{\tilde \pi(z)}\right]$ with respect to $\eta$ in the space of smooth functions with bounded variance. Using a bound on the variance of the perturbation as opposed to its norm is explained by the fact that the expectation of the perturbation only impacts the normalization factor of the bridging density function and does not affect the performance of AIS. Therefore, by constraining the perturbations to have bounded variance, we are accounting for the equivalency of annealing paths with different time dependent scaling. 

The optimal perturbation $\eta^*_{q_t,\pi}$ is the steepest descent direction of the inverse KL divergence which we can use to derive the annealing dynamics via
\begin{align}\label{eq:f-ode}
    \frac{d}{dt}\log \tilde q(z) &= \eta_{q_t,\pi}^*(z) + b(t).
\end{align} 

Note that $b(t):\R\to\R$ in \Cref{eq:f-ode} is an arbitrary log scale function of $t$ which can be absorbed by $\eta^*_{q_t,\pi}(z)$ for each $t$ without loss of generality. In the following, we show that with initial distribution density $\tilde q_0$ and following the infinitesimal perturbations in direction of $\eta_{q_t,\pi}^*$
we recover an arbitrarily scaled geometric mean path. In addition to the dynamics of the optimal greedy annealing path, we derive a discretization schedule in the following proposition which ensures a steady decrease in $J_\text{KL}$ as AIS algorithm proceeds.
\begin{proposition}
    \label{theo:geom-steepest} 
    Assume the same conditions as in \Cref{lem:kl-derivative}. Additionally, consider the set of smooth perturbation directions with bounded variance \begin{align}
        \mathcal M_{q_t,\pi}\coloneqq \{\eta\in \mathcal C^1: \textup{Var}_{q_t}[\eta(z)]\leq c_{q_t,\pi}^\text{KL}\},\nonumber
    \end{align}
    for $B \geq 0$ and $c_{q_t,\pi}^\text{KL} = B/\textup{Var}_{q_t}[\log (\tilde \pi(z)/\tilde q_t(z))]$.
    Then the steepest descent direction that minimizes the derivative in \Cref{eq:kl-derivative} in $\mathcal M_{q_t,\pi}$ is
    \begin{align}\label{eq:etaopt-kl}
        \eta^*_{q_t,\pi}(z) = \frac {c_{q_t,\pi}^\text{KL}}{\sqrt B} \log\frac{\tilde\pi(z)}{\tilde q_t(z)} + b,
    \end{align}
    for arbitrary $b\in\R$. A solution to the Ordinary Differential Equation (ODE) $\frac{d}{dt}\phi_t(z) = \eta^*_{q_t, \pi}(z)$ with initial condition $\phi_0(z) = \log \tilde q_0(z)$ is the scaled geometric mean path $\log \tilde q^\text{geom}_{1 - \beta(t)}(z)$ which for $\beta(t)$ set as 
    \begin{align}\label{eq:betageom}
        \beta^\text{KL}(t) \coloneqq e^{-\int_0^tc_{q_r,\pi}^\text{KL}dr/\sqrt B},
    \end{align}
decreases the inverse KL divergence in \Cref{eq:kl} with constant rate.
\end{proposition}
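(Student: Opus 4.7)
The plan is to break the proposition into three pieces: (i) identify the steepest descent direction on the constraint set $\mathcal{M}_{q_t,\pi}$, (ii) check that the geometric mean path with an appropriate time-reparametrization solves the resulting ODE, and (iii) compute the time derivative of $J_\text{KL}$ along this solution.

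For (i), by \Cref{lem:kl-derivative} the directional derivative equals $\textup{Cov}_{q_t}[\eta,\log(\tilde q_t/\tilde\pi)]$, so the problem reduces to minimizing a covariance with one argument fixed, subject to a variance constraint. The natural tool is Cauchy--Schwarz for covariances: $|\textup{Cov}_{q_t}[\eta,g]|\le\sqrt{\textup{Var}_{q_t}[\eta]\,\textup{Var}_{q_t}[g]}$, with equality (and the correct negative sign) iff $\eta$ is an affine function of $-g$, where $g=\log(\tilde q_t/\tilde\pi)$. Matching the variance bound $c_{q_t,\pi}^\text{KL}$ and substituting the definition $c_{q_t,\pi}^\text{KL}=B/\textup{Var}_{q_t}[\log(\tilde\pi/\tilde q_t)]$ yields exactly the coefficient $c_{q_t,\pi}^\text{KL}/\sqrt B$ in front of $\log(\tilde\pi/\tilde q_t)$, with the free intercept $b$ corresponding to the fact that additive constants contribute zero covariance.

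For (ii), setting $a(t):=c_{q_t,\pi}^\text{KL}/\sqrt B$ and absorbing the intercept into a time-dependent $b(t)$, the ODE reads $\partial_t\phi_t(z)=a(t)(\log\tilde\pi(z)-\phi_t(z))+b(t)$. This is a linear first-order ODE in $\phi_t$ for each $z$, with integrating factor $e^{\int_0^t a(s)\,ds}$. I would then simply verify directly that the candidate $\phi_t(z)=\beta(t)\log q_0(z)+(1-\beta(t))\log\tilde\pi(z)$ with $\beta(t)=\beta^\text{KL}(t)$ satisfies it: differentiating gives $\beta'(t)(\log q_0-\log\tilde\pi)$, while the right-hand side evaluates to $a(t)\beta(t)(\log\tilde\pi-\log q_0)$, so the two agree precisely because $\beta^\text{KL}{}'(t)=-a(t)\beta^\text{KL}(t)$ with $\beta^\text{KL}(0)=1$, matching the initial condition $\phi_0=\log\tilde q_0=\log q_0$ up to the usual additive normalization ambiguity carried by $b(t)$.

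For (iii), I would apply \Cref{lem:kl-derivative} along the trajectory itself:
\begin{align*}
\frac{d}{dt}J_\text{KL}[\phi_t] &= \textup{Cov}_{q_t}\!\left[\eta^*_{q_t,\pi}(z),\log\tfrac{\tilde q_t(z)}{\tilde\pi(z)}\right]\\
&= -\frac{c_{q_t,\pi}^\text{KL}}{\sqrt B}\textup{Var}_{q_t}\!\left[\log\tfrac{\tilde\pi(z)}{\tilde q_t(z)}\right]=-\sqrt B,
\end{align*}
where the intercept $b$ drops out and the last equality uses $c_{q_t,\pi}^\text{KL}\cdot\textup{Var}_{q_t}[\log(\tilde\pi/\tilde q_t)]=B$. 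This constant rate is exactly the assertion of the proposition.

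The only delicate step I anticipate is the book-keeping around the unnormalized-versus-normalized ambiguity: the proposition allows the geometric path to be an arbitrarily scaled solution, and the ODE contains a free $b$ (or $b(t)$) that shifts $\phi_t$ by a $z$-independent constant. Making precise that this freedom is exactly what $\beta^\text{KL}$ ignores, and that it does not affect either the steepest descent analysis (constants have zero variance and zero covariance) or the KL rate computation (KL depends only on the normalized $q_t$), is the one place where care is needed; the rest is Cauchy--Schwarz, a linear ODE, and a direct substitution.
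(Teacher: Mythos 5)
Your proposal is correct and follows essentially the same route as the paper's proof: Cauchy--Schwarz (with its equality case) to identify the steepest descent direction, the linear first-order ODE solved via the integrating factor $e^{\int_0^t a(s)\,ds}$ to recover the scaled geometric path with the $z$-independent offset absorbed into $b(t)$, and the constant-rate claim along the trajectory. You are in fact more explicit than the paper, which merely asserts the constant derivative where you compute it to equal $-\sqrt B$.
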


Note that the annealing process described by \Cref{theo:geom-steepest} encompasses the duration $0 < t$ as opposed to $t\in[0,1]$ in the original AIS algorithm. This is due to taking infinitesimal annealing steps along the derivatives of $J_\text{KL}$. In particular, the annealing schedule in \Cref{theo:geom-steepest} is defined with $\tau(t) = 1 - \beta(t)$ and for $\beta(t)=\beta^\text{KL}(t)$ as $t\to \infty$, $\tau(t)\to 1^-$ and $q_t(z) \to \pi(z)$ for all $z\in\R^d$, although the normalization factor of $\tilde q_t$ may grow to infinity. Additionally, due to the derivative of $\tau(t)$,
\begin{align}
    \dot\tau(t) \propto (1 - \tau(t))/\text{Var}_{q_t}[\log (\tilde \pi/\tilde q_t)],\nonumber
\end{align}
annealing is slower when the particle distribution is further away from the target (i.e. in the beginning of the annealing process). The constant rate schedule enforces a balanced division of the sampling difficulty per annealing step in comparison to the heuristics such as linear. In other words, to converge to the target, in each iteration, the particle distribution is altered by doing an equal amount of work which is measured in terms of the reduction of the KL-divergence. 

Due to the logarithmic term in the derivative, annealing is slower when the particle distribution extends beyond the target distribution support, while being less sensitive to unexplored modes in the target distribution with a damped sensitivity as t grows. However, in \Cref{sec:experiment1} we show this not to be an issue as using this schedule results in coverage of all of the modes of multimodal targets in the experiments.

\subsection{Extension to $f$-Divergences}

In this section, we extend our method to $f$-divergences to explore other dynamics that are optimal with respect to alternative step-wise objectives. The $f$-divergence between two distributions with unnormalized densities $\tilde \pi(z)$ and $\tilde q_t(z)$ is defined as
\begin{align}
    J_f[\phi_t] = \text{D}_f(\pi||q_t) = \frac{\int \tilde q_t(z) f \left(u_t(z)\right)dz}{\int \tilde q_t(z)dz},\nonumber
\end{align}
where $f:\R\to\R$ is convex, lower-semicontinuous function with $f(1) = 0$ and 
\begin{align}
    u_t(z) = \frac{\tilde \pi(z)}{\tilde q_t(z)}/\mathbb E_{q_t}\left[\frac{\tilde \pi(z)}{\tilde q_t(z)}\right].\nonumber
\end{align}

Following a similar approach to the previous section, we can find the optimal perturbation of $\phi_t$ and use \Cref{eq:f-ode} to obtain the annealing unnormalized density dynamics along the steepest descent direction. In \Cref{lem:f-derivative} we derive the steepest descent direction of $f$-divergence.

\begin{lemma}
\label{lem:f-derivative}
    Assume $\tilde q_t(z)$ and $\tilde\pi(z)$ are positive unnormalized density functions and $f:\R\to\R$ be convex and differentiable. Let $\phi_{t+\Eps(z)} = \phi_t(z) + \eps \eta(z)$ for $\phi_t(z) = \log \tilde q_t(z)$. Then we have,
    \begin{align}
        \at{\frac{d}{d\eps}J_f[\phi_{t+\eps}]}{\eps=0} = &\textup{Cov}_{q_t}\left[\eta(z), -g(u_t(z))\right],\nonumber
    \end{align}
    where $g(u) = u \dot f(u) - f(u)$ and $\dot f(u) = df(u)/du$. Moreover, consider the set of smooth perturbation directions with bounded variance $\mathcal M^f_{q_t,\pi}\coloneqq \left\{\eta\in \mathcal C^1: \textup{Var}_{q_t}[\eta(z)]\leq c_{q_t,\pi}^f\right\}$ for $B \geq 0$ and
    \begin{align}
        c_{q_t,\pi}^f = B/\textup{Var}_{q_t}\left[g(u_t(z))\right].\nonumber
    \end{align}
    Then the steepest descent direction that minimizes this derivative in $\mathcal M_{q_t,\pi}^f$ is
    \begin{align}
        \eta^*_{q_t,\pi}(z) = \frac{c_{q_t,\pi}^f}{\sqrt B}g(u_t(z)) + b,\nonumber
    \end{align}
    for arbitrary $b\in\R$.
\end{lemma}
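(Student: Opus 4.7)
The plan is to mirror the structure of the (implicit) proof of Lemma 3.1 on the inverse KL divergence, adapting the computation to the more general $f$-divergence functional. I would split the work into two parts: (i) compute the Gâteaux derivative of $J_f[\phi_{t+\eps}]$ at $\eps=0$ and show it equals $\textup{Cov}_{q_t}[\eta,-g(u_t)]$; (ii) solve the resulting constrained linear minimization by Cauchy--Schwarz.

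For part (i), I would first expand the perturbation. Since $\phi_{t+\eps}(z)=\phi_t(z)+\eps\eta(z)$, the perturbed unnormalized density is $\tilde q_{t+\eps}(z)=\tilde q_t(z)\,e^{\eps\eta(z)}$. Let $Z(\eps)\coloneqq\int\tilde q_t(z)e^{\eps\eta(z)}\,dz$, so that $q_{t+\eps}(z)=\tilde q_t(z)e^{\eps\eta(z)}/Z(\eps)$. A short calculation then shows that $u_{t+\eps}(z)=\pi(z)/q_{t+\eps}(z)=u_t(z)\cdot Z(\eps)/(Z(0) e^{\eps\eta(z)})$, where I used that the outer normalization $\mathbb E_{q_t}[\tilde\pi/\tilde q_t]$ cancels to give $u_t=\pi/q_t$. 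Differentiating at $\eps=0$ yields
\begin{align}
\left.\tfrac{d}{d\eps}q_{t+\eps}\right|_{\eps=0} &= q_t\bigl(\eta-\mathbb E_{q_t}[\eta]\bigr),\nonumber\\
\left.\tfrac{d}{d\eps}u_{t+\eps}\right|_{\eps=0} &= u_t\bigl(\mathbb E_{q_t}[\eta]-\eta\bigr).\nonumber
\end{align}
I would then apply the product rule to $J_f[\phi_{t+\eps}]=\int q_{t+\eps}(z)f(u_{t+\eps}(z))\,dz$ and combine the two terms:
\begin{align}
\left.\tfrac{d}{d\eps}J_f[\phi_{t+\eps}]\right|_{\eps=0}
&=\mathbb E_{q_t}\!\bigl[(\eta-\mathbb E_{q_t}[\eta])\,(f(u_t)-u_t\dot f(u_t))\bigr]\nonumber\\
&=-\,\textup{Cov}_{q_t}\!\bigl[\eta,\,g(u_t)\bigr],\nonumber
\end{align}
using $g(u)=u\dot f(u)-f(u)$ and the fact that centering one argument of a covariance does not change its value. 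Mild regularity (interchange of derivative and integral) is all that is required, and it is justified by the positivity and smoothness assumptions on $\tilde q_t,\tilde\pi$ together with $\eta\in\mathcal C^1$.

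For part (ii), minimizing the linear functional $\eta\mapsto-\textup{Cov}_{q_t}[\eta,g(u_t)]$ over $\mathcal M_{q_t,\pi}^f$ is, up to sign, a constrained maximization of a covariance under a variance cap. Cauchy--Schwarz gives
\begin{align}
\bigl|\textup{Cov}_{q_t}[\eta,g(u_t)]\bigr|\le\sqrt{\textup{Var}_{q_t}[\eta]\,\textup{Var}_{q_t}[g(u_t)]}\le\sqrt{B},\nonumber
\end{align}
with the last inequality using the definition $c^f_{q_t,\pi}=B/\textup{Var}_{q_t}[g(u_t)]$. Equality holds iff $\eta-\mathbb E_{q_t}[\eta]$ is a nonnegative scalar multiple of $g(u_t)-\mathbb E_{q_t}[g(u_t)]$, and saturating the variance constraint fixes the scalar to $\lambda=c^f_{q_t,\pi}/\sqrt{B}$. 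This yields $\eta^*_{q_t,\pi}(z)=(c^f_{q_t,\pi}/\sqrt{B})\,g(u_t(z))+b$ for arbitrary $b\in\R$, as claimed.

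The main obstacle I expect is bookkeeping rather than a conceptual difficulty: correctly tracking the $Z(\eps)$ factor inside $u_{t+\eps}$ and verifying that its $\eps$-derivative combines cleanly with the $q_{t+\eps}$ derivative to collapse the two integral terms into a single centered expectation against $f(u_t)-u_t\dot f(u_t)$. A secondary pitfall is ensuring the $\mathbb E_{q_t}[g(u_t)]$ term drops out by centering, so that the final expression can be written as a covariance and Cauchy--Schwarz applies cleanly. If $f$ is only assumed convex and lower-semicontinuous (not differentiable) in the statement, I would additionally note that $\dot f$ exists $u_t$-a.s.\ by convexity, or strengthen the hypothesis to differentiability as the lemma already does, avoiding subgradient subtleties.
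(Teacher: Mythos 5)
Your proposal is correct and follows essentially the same route as the paper's proof: both compute the Gâteaux derivative by tracking how the normalization $Z(\eps)$ perturbs $q_{t+\eps}$ and $u_{t+\eps}$ (the paper likewise notes $u_{t+\eps}=u_t e^{-\eps\eta}\mathbb E_{q_t}[e^{\eps\eta}]$), collapse the four resulting terms into $\textup{Cov}_{q_t}[\eta,-g(u_t)]$, and then invoke Cauchy--Schwarz for the constrained minimization. Your write-up is in fact more detailed than the paper's, which states the expanded derivative and dispenses with part (ii) in one line.
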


Unfortunately, the solution of ODE in \Cref{eq:f-ode} with the optimal perturbation direction does not have a closed form for general $f$ functions. However, we can get a set of solutions for the specific case of $\alpha$-divergences in the following Proposition, which correspond to the power mean annealing path previously proposed in \citep{brekelmans2020annealed}\footnote{For illustrations of bridging distributions with different $\alpha$ values we refer the readers to the works of \citet{brekelmans2020annealed} and \citet{masrani2021q}},
\begin{align}\label{eq:power}
    \tilde q^{\alpha\text{-pow}}_t(z) = \left(t\tilde \pi(z)^\alpha + (1 - t)\tilde q_0(z)^\alpha\right)^{\frac{1}{\alpha}}.
\end{align}

\begin{proposition}
    \label{theo:power-steepest} 
    Assume the same conditions as in \Cref{lem:f-derivative} and $f(u) = (u^\alpha - 1 - \alpha(u-1))/\alpha(\alpha-1)$ for $\alpha \centernot\in \{0,1\}$ or $f(u)=u\log u$ for $\alpha=1$. Then $\alpha$-power mean path $\log \tilde q_{1-\beta(t)}^{\alpha\text{-pow}}$ is a solution to the ODE $\frac{d}{dt}\phi_t(z) = \eta^*_{q_t, \pi}(z)$ with initial condition $\phi_0(z) = \log \tilde q_0(z)$ and setting $\beta(t)$ to
    \begin{align}\label{eq:beta}
        \beta^\alpha(t) \coloneqq e^{-\int_0^t (c_{q_r,\pi}^fZ_{q_r}^\alpha/\sqrt BZ_\pi^\alpha)dr},
    \end{align}
    results in constant rate decrease in $f$-divergence.
\end{proposition}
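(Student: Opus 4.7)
The plan is to verify by direct substitution that the power-mean ansatz $\phi_t(z)=\log\tilde q^{\alpha\text{-pow}}_{1-\beta(t)}(z)$ satisfies the ODE $\tfrac{d}{dt}\phi_t(z)=\eta^*_{q_t,\pi}(z)$ given by \Cref{lem:f-derivative}, and then to read off the schedule $\beta^\alpha(t)$ by matching the scalar prefactor. The first preparatory step is to specialize $g(u)=u\dot f(u)-f(u)$ to the $\alpha$-family. A short calculation for $f(u)=(u^\alpha-1-\alpha(u-1))/(\alpha(\alpha-1))$ gives $g(u)=(u^\alpha-1)/\alpha$, and the $\alpha=1$ choice $f(u)=u\log u$ gives $g(u)=u$, which agrees up to an additive constant that is absorbed by the free parameter $b$ in $\eta^*_{q_t,\pi}$. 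Furthermore, since $\mathbb{E}_{q_t}[\tilde\pi/\tilde q_t]=Z_\pi/Z_{q_t}$, the argument $u_t(z)$ reduces to the normalized ratio $\pi(z)/q_t(z)$, so $g(u_t(z))=((\pi/q_t)^\alpha-1)/\alpha$.

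Next I would differentiate the ansatz. Writing $\tau=1-\beta(t)$ and using $\tilde q_\tau^\alpha=\tau\tilde\pi^\alpha+(1-\tau)\tilde q_0^\alpha$, chain-rule differentiation yields
\begin{equation*}
\frac{d}{dt}\log\tilde q_{\tau(t)}^{\alpha\text{-pow}}(z)=\frac{\dot\tau(t)}{\alpha}\cdot\frac{\tilde\pi(z)^\alpha-\tilde q_0(z)^\alpha}{\tilde q_{\tau(t)}(z)^\alpha}.
\end{equation*}
Eliminating $\tilde q_0^\alpha$ through the defining identity gives $\tilde\pi^\alpha-\tilde q_0^\alpha=(\tilde\pi^\alpha-\tilde q_\tau^\alpha)/(1-\tau)$, so the derivative becomes
\begin{equation*}
\frac{\dot\tau(t)}{\alpha(1-\tau(t))}\left(\Bigl(\frac{\tilde\pi(z)}{\tilde q_{\tau(t)}(z)}\Bigr)^\alpha-1\right)=\frac{\dot\tau(t)}{\alpha(1-\tau(t))}\left(\Bigl(\frac{Z_\pi}{Z_{q_t}}\Bigr)^\alpha\Bigl(\frac{\pi(z)}{q_t(z)}\Bigr)^\alpha-1\right).
\end{equation*}

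The third step is to match this expression with $\eta^*_{q_t,\pi}(z)=(c^f_{q_t,\pi}/\sqrt B)\,g(u_t(z))+b$. Because $g(u_t(z))=((\pi/q_t)^\alpha-1)/\alpha$ has the same functional form in $z$, equating the coefficient of $(\pi/q_t)^\alpha$ on both sides forces
\begin{equation*}
\frac{\dot\tau(t)}{1-\tau(t)}=\frac{c^f_{q_t,\pi}}{\sqrt B}\Bigl(\frac{Z_{q_t}}{Z_\pi}\Bigr)^\alpha,
\end{equation*}
while the remaining $z$-independent piece is absorbed into $b(t)$ (which is allowed since $b$ is arbitrary and only shifts the log-normalizer). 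Writing $\dot\beta/\beta=-\dot\tau/(1-\tau)$ and integrating from $0$ to $t$ with $\beta(0)=1$ recovers exactly $\beta^\alpha(t)=\exp(-\int_0^t c^f_{q_r,\pi}Z_{q_r}^\alpha/(\sqrt B\,Z_\pi^\alpha)\,dr)$ as in \Cref{eq:beta}. Constant-rate decrease of $J_f$ then follows immediately by inserting $\eta=\eta^*$ into the derivative formula from \Cref{lem:f-derivative}: $\tfrac{d}{dt}J_f[\phi_t]=\mathrm{Cov}_{q_t}[\eta^*,-g(u_t)]=-(c^f_{q_t,\pi}/\sqrt B)\,\mathrm{Var}_{q_t}[g(u_t)]=-\sqrt B$, using the definition of $c^f_{q_t,\pi}$.

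I expect the main obstacle to be bookkeeping rather than conceptual: the expression $\tilde q_\tau^\alpha - \tilde q_0^\alpha$ must be rewritten purely in terms of $\tilde q_\tau$ and $\tilde\pi$ so that the derivative of the ansatz can be compared to $\eta^*_{q_t,\pi}$, and one must carefully track the normalization factors $Z_{q_t}/Z_\pi$ that appear when passing between unnormalized and normalized ratios, since these are exactly what produces the $(Z_{q_t}/Z_\pi)^\alpha$ factor in \Cref{eq:beta}. The $\alpha=0$ and $\alpha=1$ edge cases require either taking limits or a separate one-line verification, since $g$ takes the logarithmic form $g(u)=u\log u$ there, but the structure of the argument is identical.
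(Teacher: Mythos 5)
Your proof is correct, but it runs in the opposite direction from the paper's. The paper solves the ODE forward: after computing $g(u)=(u^\alpha-1)/\alpha$ it substitutes $\tilde s_t(z)=\tilde q_t(z)^\alpha$, which turns $\frac{d}{dt}\phi_t=\eta^*_{q_t,\pi}$ into a first-order linear ODE in $\tilde s_t$, solves it with an integrating factor to get the general solution $\tilde s_t(z)=\beta(t)\bigl[\tilde\pi(z)^\alpha\int_0^t c^f_{q_r,\pi}Z_{q_r}^\alpha/(\sqrt B Z_\pi^\alpha\beta(r))\,dr+\tilde q_0(z)^\alpha\bigr]$, and then picks the free function $b(t)$ so that this collapses to the power-mean form with the stated $\beta^\alpha(t)$. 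You instead take the power-mean ansatz as given, differentiate it, eliminate $\tilde q_0^\alpha$ via the defining identity, and match the coefficient of the $z$-dependent term $(\pi/q_t)^\alpha$ against $\eta^*_{q_t,\pi}$ to read off $\dot\tau/(1-\tau)$ and hence $\beta^\alpha(t)$. Since the proposition only asserts that the power-mean path \emph{is a} solution, your verification is fully adequate and arguably more direct; what the paper's constructive route buys is the knowledge that the power-mean family is the \emph{general} solution of the ODE up to the time-dependent rescaling encoded in $b(t)$. A further point in your favor: you actually carry out the constant-rate computation $\frac{d}{dt}J_f[\phi_t]=-\,(c^f_{q_t,\pi}/\sqrt B)\,\mathrm{Var}_{q_t}[g(u_t)]=-\sqrt B$, which the paper only asserts in one sentence. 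Two cosmetic nits: in your closing paragraph you write that $g$ takes the form $u\log u$ at $\alpha=1$, contradicting your own (correct) earlier computation $g(u)=u$ there; and the $\alpha=0$ case is outside the scope of this proposition (it is handled separately as the inverse-KL case in \Cref{theo:geom-steepest}), so no limit argument is needed.
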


Here, the annealing speed is inversely related to $\Var_{q_t}[g(\pi/q_t)]$. With $\alpha=1$ the annealing dynamics correspond to the arithmetic mean path (moment-averaging path in the exponential family \citep{grosse2013annealing}). We have listed a few of the popular choices of $\alpha$-divergences with their respective $f$ and $g$ functions in \Cref{tab:alpha}.

\begin{table}[h]
    \caption{List of $f$-divergences}\label{tab:alpha}
    \begin{center}
    \begin{small}
    \begin{sc}
    \setlength\tabcolsep{2.5pt}
    \vskip .2in
    \begin{tabular}{|cccc|}
    \hline
        Name & $f(u)$ & $g(u)$ & $\alpha$\\
    \hline
        KL divergence & $u\log u$ & $u$ & $1^*$\\
        Inverse KL divergence & $-\log u$ & $\log u - 1$ & $0^*$ \\
        Pearson $\chi^2$ & $(u - 1)^2$ & $u^2 - 1$ & $2$\\
        Neyman $\chi^2$ & $\frac{(u - 1)^2}{u}$ & $2 - \frac{2}{u}$ & $-1$\\
        Squared Hellinger & $(\sqrt u  - 1)^2$ & $\sqrt u - 1$ & $1/2$\\
        $\alpha$-div ($\alpha\centernot\in\{0,1\}$) & $\frac{u^\alpha -\alpha u}{\alpha(\alpha-1)} + \frac{1}{\alpha}$ & $\frac{1}{\alpha}(u^\alpha + 1)$ & $\alpha$ \\
    \hline
    \end{tabular}
    \end{sc}
    \end{small}
    \end{center}
\end{table}


\section{Constant Rate AIS}

Using \Cref{theo:power-steepest}, we can design an adaptive AIS algorithm with constant rate decrease in $\alpha$-divergence at each annealing iteration. 
In  \Cref{algo}, we provide a pseudocode for Constant Rate AIS (CR-AIS) where we alternate between updating the particle location with standard AIS steps and tuning the schedule. Evaluation of the constant rate schedule $\tau(t) = 1 - \beta^\alpha(t)$ given in \Cref{eq:beta} at time $t$ requires the values of $c_{q_r,\pi}^f$ and $Z_\pi/Z_{q_r}$ for all $0 \leq r \leq t$ and integration. We use weighted AIS particles up to step $i$ to estimate the integrand in $\beta^\alpha(t_i)$. We set $t_i = i\delta$ for small $\delta > 0$ and approximate the integral with Riemann sum
\begin{align}
    \int_0^{i\delta} (c_{q_r,\pi}^fZ_{q_r}^\alpha/\sqrt BZ_\pi^\alpha)dr\approx \sum_{k=0}^{i} \delta c_{q_{k\delta},\pi}^fZ_{q_{k\delta}}^\alpha/\sqrt BZ_\pi^\alpha,\nonumber
\end{align}
where $q_{k\delta}=q^{\alpha\text{-pow}}_{1-\beta^\alpha(k\delta)}$ is the normalized density of power mean path in \Cref{eq:power}. We set $B=1$ for simplicity. We can rewrite the above equation incrementally, noting that
\begin{align}\label{eq:betarec}
    \beta^\alpha((i+1)\delta) &= \beta^\alpha(i\delta)e^{- \int_{i\delta}^{(i+1)\delta} (c_{q_r,\pi}^fZ_{q_r}^\alpha/Z_\pi^\alpha)dr}\nonumber\\
    &\approx \beta^\alpha(i\delta) \exp(- \delta c_{q_{i\delta},\pi}^fZ_{q_{i\delta}}^\alpha/Z_\pi^\alpha).
\end{align}

In the \Cref{algo}, 
To compute $\exp(- \delta c_{q_{i\delta},\pi}^fZ_{q_{i\delta}}^\alpha/Z_\pi^\alpha)$ we use particles $z_{0:i}\sim q_0\prod_{k=1}^i \qr_k$ and their weights $w(z_{0:i}) = \prod_{k=1}^i \tilde q_{t_k}(z_{k-1})/\tilde q_{t_{k-1}}(z_{k-1})$ given by the AIS algorithm up to iteration $i$ to estimate $Z_{q_{i\delta}}/Z_\pi$ (line 7-9) and the empirical variance under $q_{i\delta}$ (line 10-11). We reduce the variance of the estimated integrand by reusing the same set of particles to perform both estimations for all transitions. Having these estimates, we can approximate $\beta^\alpha((i+1)\delta)$ recursively from approximation of $\beta^\alpha(i\delta)$ in the previous iteration using \Cref{eq:betarec} (line 12). 

Using the constant rate schedule estimate, the next annealing density is updated (line 13-14) and particles $(z_i^j)_j$ are transitioned to their new location with a transition kernel which is invariant with respect to $q_{(i+1)\delta}$ (line 16-17).  We update the importance weights with standard AIS  procedure (line 18) and repeat the process until convergence. 

To ensure stability of the numerical computations we abort the algorithm when the empirical variance becomes lower than a given threshold. This condition indicates that the last annealing density is sufficiently close to the target. However, the empirical variance given by the particles may be much lower than the true value and mislead the algorithm to terminate the annealing process in a handful of steps especially for larger $|\alpha|$. We recommend to mitigate this problem by constraining the maximum step size in the schedule as we did in our experiments with high dimensional targets.

Another consideration is to use disjoint sets of particles for adjusting the schedule and testing. We recommend to perform the final estimation after the tuning phase and fixing the schedule to ensure the independence between samples and the consistency of the importance weighted estimate.

When \Cref{eq:f-ode} does not have a closed form solution, we can use CR-AIS with numerical approximation of its solution instead (line 14 of \Cref{algo}). New annealing paths may result in more robust estimation in practice as we can optimize function $f$ more effectively. We leave an efficient implementation of this extension to the future work.

\begin{algorithm}[tb]
    \caption{CR-AIS tuning for $\alpha$-divergences}\label{algo}
    \begin{algorithmic}[1]
    \STATE {\bfseries Input:} Target $\tilde\pi$, proposal density $q_0$, $\alpha$, $\delta$
    \STATE {\bfseries Output:} Discretization sequence $(\tau_{i\delta})_i$

    \STATE Set $i \gets 0$, $\beta_0 \gets 1$ and $\tau_0 \gets 0$
    \STATE Draw $z_0^j\sim q_0(z)$ for $j\in[N]$ and concatenate into $\mathbf z_i$.
    \STATE Set $\text{logw}^j \gets -\log q_0(z_0^j)$.
    \WHILE{ not converged and $i < \text{max\_steps}$}
        \STATE Set $\log \hat Z_\pi \gets \text{logsumexp}(\text{logw} + \log \tilde \pi(\mathbf z_i))$.
        \STATE Set $\log \hat Z_{q_{i\delta}} \gets \text{logsumexp}(\text{logw} + \log \tilde q_{i\delta}(\mathbf z_i))$.
        \STATE Set $r_i \gets \exp(\log \hat Z_\pi - \log \hat Z_{q_{i\delta}})$.
        \STATE Set $u_i^j \gets \frac{\tilde \pi(z^j_i)}{r_i\tilde q_{i\delta}(z^j_i)}$ for $j\in[N]$ and concat. to $\mathbf u_i$.
        \STATE Set $v_i\gets \widehat{\text{Var}}_{q_{i\delta}}[g(\mathbf u_i)]$ .
        \STATE Set $\beta_{(i+1)\delta} \gets \beta_{i\delta}\exp(-\delta /v_ir_i^\alpha)$.
        \STATE Set $\tau_{(i+1)\delta} = 1 - \beta_{(i+1)\delta}$.
        \STATE Set $\tilde q_{(i+1)\delta} = \tilde q_{\tau_{(i+1)\delta}}^{\alpha\text{-pow}}$ from \Cref{eq:power}.
        \STATE Set $i \gets i + 1$.
        \STATE Construct a $\qr_i(z|z_{i-1}^j)$ invariant w.r.t. $q_{i\delta}$.
        \STATE Draw $z_i^j\sim \qr_i(z|z_{i-1}^j)$ for $j\in[N]$.
        \STATE Set $\text{logw}^j \gets \text{logw}^j + \log \frac{\ql_i(z^j_{i-1}|z^j_i)}{\qr_i(z^j_i|z^j_{i-1})}$.
    \ENDWHILE
    \STATE Set $\text{logw}^j \gets \text{logw}^j + \log \tilde \pi(z_i^j)$.
    \end{algorithmic}
\end{algorithm}

\section{Experiments}

\begin{figure*}[h]
\vskip 0.2in
    \begin{center}
        \begin{minipage}[b]{1.\textwidth}
        \centerline{\includegraphics[width=1.\textwidth]{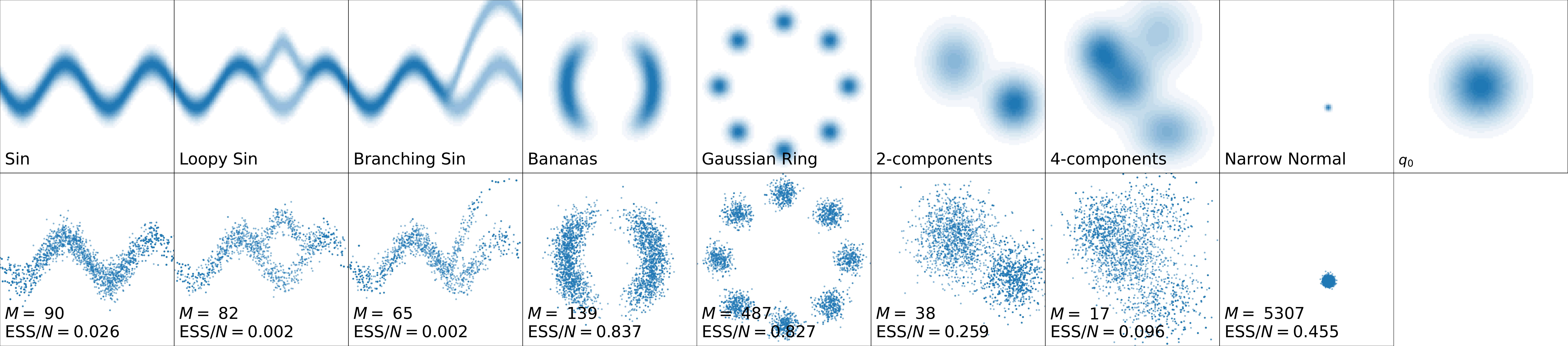}}
        \end{minipage}
        \caption{Target distributions and resampled particles with CR-AIS, with geometric mean path and $\delta = 1/32$. CR-AIS adjusts the number of iterations $M$ to the difficulty of the target and covers the support of the targets. The plot in top right corner is the initial distribution used for annealing with the same scale.}
    \label{fig:particles}
    \end{center}
\vskip -0.2in
\end{figure*}

In this section, we run a number of experiments to illustrate the performance of CR-AIS on support coverage and adaptivity with 2d distributions and we asses its efficiency and accuracy with estimation of the log normalization constant of high dimensional synthetic targets and the posterior of  Bayesian models. Code is available at \url{https://github.com/shgoshtasb/cr\_ais}.

\subsection{2D Distribution Synthesis}\label{sec:experiment1}

Here, we investigate the adaptability of our algorithm to various target distributions. We evaluate CR-AIS on complex 2d distributions which are often used to benchmark sampling \citep{rezende2014stochastic} and three other distributions. With 2D targets, we can plot the synthesized samples and assess if indeed the particle distribution converges to the targets. We use the inverse KL objective ($\alpha=0$), initial distribution $\mathcal N(0,I)$ and set $\delta=1/32$ for all the targets as we aim to show the performance differences caused by the difficulty of the sampling task (setup details in \Cref{sec:app:2d}).

\Cref{fig:particles} depicts the target distributions next to resampled particles according to the importance weights of the CR-AIS algorithm. The initial distribution is shown in top right corner of \Cref{fig:particles} with the same scale. Each plot is annotated with the number of AIS iterations, $M$, and the ESS ratio to number of particles. The algorithm adapts the number of iterations to the difficulty of sampling from the target distribution with the Narrow Gaussian distribution requiring the longest annealing sequence. Additionally, the plots show that particles have reached all the modes of the targets and cover their support. In \Cref{sec:app:smc} we compare the SMC variant of our algorithm and adaptive SMC on wider multimodal target distributions. Our similar results confirm superiority of the constant rate schedule in mode coverage despite using shorter annealing sequences.

\subsection{Adjusting the Schedule to the Annealing Path}\label{sec:experiment2}

In a setup similar to the previous experiment, we investigate how the constant rate schedule adapts across different target distributions and annealing paths. We plot the emerging approximated schedule in CR-AIS for four of the previous targets: the very narrow Gaussian target, Gaussian ring, the dual mode Bananas and the Gaussian mixture distribution with 4 components. We vary $\alpha$ between $\{-0.5, 0, 0.5, 1, 2\}$ to obtain different annealing paths. We compare the schedules to the ones obtained from Adaptive AIS where the schedule is adjusted to decrease CESS at an approximate rate of 0.7. 
We constrain the maximum step size in Adaptive AIS since large steps cause severe weight degeneracy and premature termination of annealing.
\begin{figure*}[h]
\vskip 0.2in
    \centering
        \centerline{\includegraphics[width=1.\textwidth]{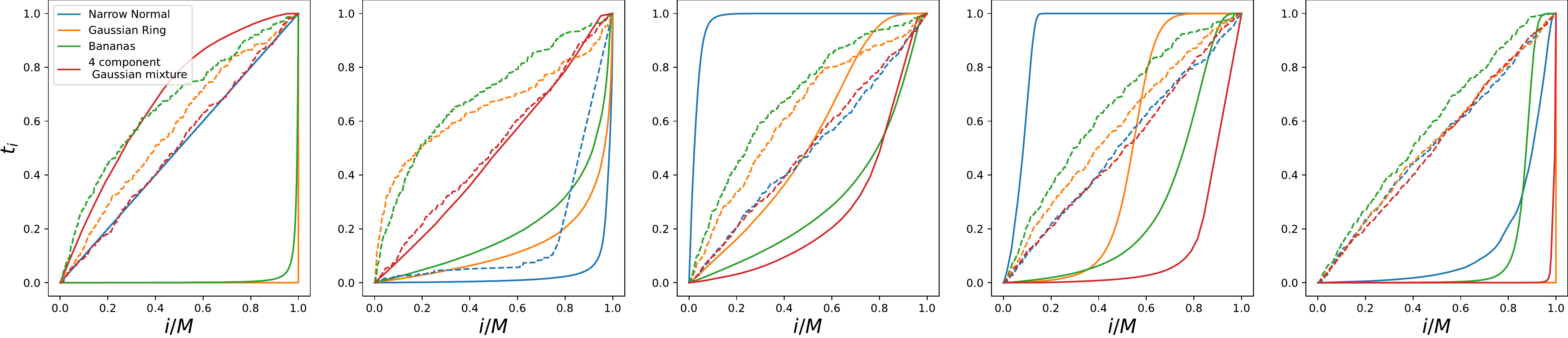}}
        \caption{Annealing schedules of CR-AIS (solid curves) and Adaptive AIS (dashed curves) across different annealing paths ($\alpha$ equal to $-0.5, 0, 0.5, 1$ and 2 from left to right) and targets, $\delta=1/32$. CR-AIS shows higher adaptivity while Adaptive AIS is essentially indifferent to the annealing path.}
        \label{fig:alphabeta}
\vskip -0.2in
\end{figure*}

In \Cref{fig:alphabeta}, we show the constant rate schedule $(\tau(i\delta))_i$ emerging from CR-AIS (solid curves) and Adaptive AIS schedules (dashed curves) for different targets and bridging distributions.
The constant rate schedule varies considerably between targets and depends explicitly on the similarity of the target distribution and the particle distribution at each time. Consider the second plot from left with $\alpha=0$. As mentioned before, when majority of particles are in the regions with small $\pi$ annealing slows down significantly, e.g. in the beginning of annealing for the narrow Gaussian example. It also explains why a much larger number of iterations are required to sample from this target ($M=5307$) while for the others the number of iterations remain moderate (487, 139 and 17). 
In contrast, Adaptive AIS has close to linear schedule on the geometric path for 3 of the distributions as CESS depends on weighted averages which may be misleading due to the weight degeneracy problem. For the narrow Gaussian where the target and initial distributions are far apart, Adaptive AIS recovers a schedule similar to CR-AIS. 

Across different values of $\alpha$, the constant rate schedule gradually changes its form depending on the target distribution. For Gaussian mixture, which has overlapping high density regions with the initial distribution, the initial annealing speed reduces monotonically with $\alpha$. For Gaussian Ring and Bananas, which have modes outside the typical region of $q_0$, the initial speed grows from $\alpha=0$ to $\alpha=0.5$ and decreases for larger $\alpha$ values. Whereas, Adaptive AIS schedule is relatively indifferent to the changes in the annealing path and has to infer the path characteristics through CESS. The high flexibility may lead to stability problems with large variances (e.g. for Gaussian Ring and $\alpha=-0.5$ or $\alpha=2$) where the schedule grows very slowly and the algorithm reaches the maximum number of iteration, or when the variance is  underestimated (e.g. for the Normal distribution with $\alpha=-0.5$) leading to a large step to the target. To avoid these pathologies we recommend to constrain the minimum and maximum of step size as in Adaptive AIS.

\subsection{Estimation of $\log Z_\pi$ in High Dimensions}\label{sec:experiment3}

We explore the absolute error of log normalization factor estimation for simple $d=128$ and $512$ dimensional distributions.
For the following targets: narrow Gaussian $\mathcal N(0, 0.01I)$, a mixture of 8 Gaussian components with variance 1, a standard Laplace distribution and a Student-T distribution with 3 degrees of freedom, we compare CR-AIS with four baselines in \Cref{tab:highdim}: Adaptive AIS with CESS decrease ratio of 0.6 (Ada. 0.6C), heuristic AIS with linear (Lin.), exponential (Exp.) and sigmoidal (Sigm.) schedules and Monte Carlo Diffusion (MCD) sampler \citep{doucet2022score} where the mean and diagonal variance of $q_0$, the schedule and the transitions are trained for 100 epochs maximizing the evidence lower bound. As CR-AIS and Adaptive AIS generate sequences of varying lengths, for better comparability, we also report a version of the algorithms with interpolated schedules indicated by asterisk ($^*$). See \Cref{sec:app:gm} for implementation details and further comparison with other adaptive baselines.

\begin{table*}[h]
    \caption{Absolute $\log Z_\pi$ estimation error for (Top) $d=128$ and (Bottom) $d=512$ dimensional distributions with $M$ close to $64$ (schedules with $^*$ use a shorter sequence for tuning and interpolate the result to $M=64$). Results are cross validated over different values of $\alpha$. Smallest error is in bold.}\label{tab:highdim}
    \vskip 0.15in
    \begin{center}
    \begin{small}
    \begin{sc}
    \sisetup{table-align-uncertainty=true, separate-uncertainty=true,}
    \renewrobustcmd{\bfseries}{\fontseries{b}\selectfont}
    \renewrobustcmd{\boldmath}{}
    \setlength\tabcolsep{2pt}
    \begin{tabular}{|c|cc|cc|cc|cc|}
    \hline
        \multirow{2}{*}{$\beta$} & \multicolumn{2}{c|}{Normal $\mathcal N(0, 0.01I)$} & \multicolumn{2}{c|}{Mixture} & \multicolumn{2}{c|}{Laplace} & \multicolumn{2}{c|}{Student-T} \\
            & Est. err. & Comput. & Est. err. & Comput. & Est. err. & Comput. & Est. err. & Comput. \\
    \hline
Lin. & 991.90 $\pm$ {\fontsize{6}{7.2}\selectfont 69.87} & 64.0 & 230.65 $\pm$ {\fontsize{6}{7.2}\selectfont 6.16} & 64.0 & 0.36 $\pm$ {\fontsize{6}{7.2}\selectfont 0.22} & 64.0 & 1.37 $\pm$ {\fontsize{6}{7.2}\selectfont 0.19} & 64.0\\
Sigm. & 907.57 $\pm$ {\fontsize{6}{7.2}\selectfont 24.04} & 64.0 & 270.55 $\pm$ {\fontsize{6}{7.2}\selectfont 6.42} & 64.0 & 0.07 $\pm$ {\fontsize{6}{7.2}\selectfont 1.14} & 64.0 & {1.46} $\pm$ {\fontsize{6}{7.2}\selectfont 0.14} & 64.0\\
Exp. & 780.58 $\pm$ {\fontsize{6}{7.2}\selectfont 40.02} & 64.0 & 507.43 $\pm$ {\fontsize{6}{7.2}\selectfont 15.04} & 64.0 & 1.21 $\pm$ {\fontsize{6}{7.2}\selectfont 0.51} & 64.0 & 2.04 $\pm$ {\fontsize{6}{7.2}\selectfont 0.31} & 64.0\\
Ada. 0.6c & 853.39 $\pm$ {\fontsize{6}{7.2}\selectfont 36.87} & 424.1 & 268.89 $\pm$ {\fontsize{6}{7.2}\selectfont 21.24} & 316.6 & 0.45 $\pm$ {\fontsize{6}{7.2}\selectfont 0.75} & 297.5 & 1.37 $\pm$ {\fontsize{6}{7.2}\selectfont 0.24} & 296.6\\
Ada. 0.6c$^*$ & 856.33 $\pm$ {\fontsize{6}{7.2}\selectfont 29.96} & 125.8 & 250.06 $\pm$ {\fontsize{6}{7.2}\selectfont 6.14} & 99.6  & 0.37 $\pm$ {\fontsize{6}{7.2}\selectfont 0.90} & 73.4 & 1.66 $\pm$ {\fontsize{6}{7.2}\selectfont 0.15} & 73.8\\
MCD & \bfseries{114.18} $\pm$ {\fontsize{6}{7.2}\selectfont 15.83} & 12800.0 & 680.60 $\pm$ {\fontsize{6}{7.2}\selectfont 13.68} & 12800.0 & 1.01 $\pm$ {\fontsize{6}{7.2}\selectfont 1.33} & 12800.0 & 1.99 $\pm$ {\fontsize{6}{7.2}\selectfont 1.26} & 12800.0\\
\hline
CR-AIS (Ours) & 788.67 $\pm$ {\fontsize{6}{7.2}\selectfont 36.25} & 78.0 & 308.82 $\pm$ {\fontsize{6}{7.2}\selectfont 6.81} & 112.0 & 0.68 $\pm$ {\fontsize{6}{7.2}\selectfont 0.31} & 117.2 & \bfseries{0.69} $\pm$ {\fontsize{6}{7.2}\selectfont 0.27} & 98.4\\
CR-AIS (Ours)$^*$ & 807.77 $\pm$ {\fontsize{6}{7.2}\selectfont 4.36} & 88.2 & \bfseries{228.54} $\pm$ {\fontsize{6}{7.2}\selectfont 0.00} & 72.0 & \bfseries{0.01} $\pm$ {\fontsize{6}{7.2}\selectfont 0.00} & 72.0 & {1.53} $\pm$ {\fontsize{6}{7.2}\selectfont 0.00} & 73.0\\

    \hline
    \hline
Lin. & 5279.55 $\pm$ {\fontsize{6}{7.2}\selectfont 79.56} & 64.0 & 1087.38 $\pm$ {\fontsize{6}{7.2}\selectfont 22.68} & 64.0 & 9.03 $\pm$ {\fontsize{6}{7.2}\selectfont 0.77} & 64.0 & \bfseries{8.16} $\pm$ {\fontsize{6}{7.2}\selectfont 0.45} & 64.0\\
Sigm. & 4757.13 $\pm$ {\fontsize{6}{7.2}\selectfont 59.34} & 64.0 & \bfseries{1022.60} $\pm$ {\fontsize{6}{7.2}\selectfont 23.55} & 64.0 & \bfseries{6.95} $\pm$ {\fontsize{6}{7.2}\selectfont 2.48} & 64.0 & 8.71 $\pm$ {\fontsize{6}{7.2}\selectfont 1.24} & 64.0\\
Exp. & {4435.05} $\pm$ {\fontsize{6}{7.2}\selectfont 110.36} & 64.0 & 1683.21 $\pm$ {\fontsize{6}{7.2}\selectfont 13.73} & 64.0 & 13.17 $\pm$ {\fontsize{6}{7.2}\selectfont 0.40} & 64.0 & 13.98 $\pm$ {\fontsize{6}{7.2}\selectfont 1.08} & 64.0\\
Ada. 0.6c & 5181.95 $\pm$ {\fontsize{6}{7.2}\selectfont 232.09} & 503.8 & 1147.02 $\pm$ {\fontsize{6}{7.2}\selectfont 142.66} & 367.2 & 9.12 $\pm$ {\fontsize{6}{7.2}\selectfont 3.01} & 413.8 & 9.72 $\pm$ {\fontsize{6}{7.2}\selectfont 2.41} & 247.7\\
Ada. 0.6c$^*$ & {4423.42} $\pm$ {\fontsize{6}{7.2}\selectfont 204.04} & 117.4  & 1272.70 $\pm$ {\fontsize{6}{7.2}\selectfont 158.61} & 85.2 & 8.27 $\pm$ {\fontsize{6}{7.2}\selectfont 1.33} & 94.8 & 9.30 $\pm$ {\fontsize{6}{7.2}\selectfont 0.56} & 81.6\\
MCD & \bfseries{707.85} $\pm$ {\fontsize{6}{7.2}\selectfont 110.02} & 12800.0 & 2173.25 $\pm$ {\fontsize{6}{7.2}\selectfont 43.78} & 12800.0 & 16.94 $\pm$ {\fontsize{6}{7.2}\selectfont 2.51} & 12800.0 & 20.95 $\pm$ {\fontsize{6}{7.2}\selectfont 0.93} & 12800.0\\
\hline
CR-AIS (Ours) & {4413.95} $\pm$ {\fontsize{6}{7.2}\selectfont 95.75} & 129.6 & 1200.12 $\pm$ {\fontsize{6}{7.2}\selectfont 27.12} & 140.0 & 8.75 $\pm$ {\fontsize{6}{7.2}\selectfont 1.84} & 110.8 & 8.40 $\pm$ {\fontsize{6}{7.2}\selectfont 1.28} & 98.4\\
CR-AIS (Ours)$^*$ & 4546.65 $\pm$ {\fontsize{6}{7.2}\selectfont 57.80} & 128.8 & {1069.32} $\pm$ {\fontsize{6}{7.2}\selectfont 1.96} & 75.0 & {7.93} $\pm$ {\fontsize{6}{7.2}\selectfont 0.74} & 78.6 & {8.98} $\pm$ {\fontsize{6}{7.2}\selectfont 0.00} & 73.0\\
\hline

    \end{tabular}
    \end{sc}
    \end{small}
    \end{center}
    \vskip -0.1in
\end{table*}

The average computation complexity of the sampling algorithms are reported in terms of the number of times $\log \tilde \pi$ is evaluated during tuning and testing. This value is proportional to the number of times $\log \tilde \pi$ or its gradient are evaluated which are generally the expensive part of the sampling. In Adaptive AIS this corresponds to the number of iterations in the search process of every update to the schedule for every step $i$ during tuning and the final number of discretization steps $M$ for the estimation phase. A parallel measure of complexity counts one schedule update per iteration during tuning in CR-AIS and one schedule update for each pass through the sequence for each annealing step in MCD during training. 

MCD has a clear advantage for the Normal target as the parameters of its initial distribution are trained to match the target. However, it's performance drops drastically on the other distributions with insufficient training and it is difficult to justify its expensive training overhead without amortization.

As expected, the performance of the heuristic schedules depend on the target. The exponential schedule is superior for the Normal target in both $d=128$ and $d=512$. The linear schedule and the sigmodal schedule are more accurate for the Student-T and the Laplace distributions, respectively, while their ranking changes with $d$ on the Gaussian mixture. Cross validating the heuristic would reduce the efficiency by three folds. On the other hand, at least one of the CR-AIS variations is able to beat the linear, the exponential, and the sigmoidal schedules in 7, 7 and 6 out of the 8 targets with an average overhead of \%40 due to tuning with interpolated schedule and \%70 without it.

CR-AIS is able to improve over Adaptive AIS while having a higher efficiency. In particular, computation complexity of non-interpolated Adaptive AIS is about $3.5\times$ more than CR-AIS on average, while CR-AIS estimations are more accurate for 5 of the 8 experiments. As the constant rate schedule preserves its form with different $\delta$ scales (see e.g. \Cref{sec:app:bayes}) CR-AIS can exploit this property leading to a better performance in comparison to interpolated Adaptive AIS in all 8 of the distributions.

\subsection{Bayesian Logistic Regression}\label{sec:experiment4}

In this section, we compare the computation efficiency of CR-AIS to heuristic and Adaptive AIS by evaluating the log marginal likelihood of two Bayesian models. We use two UCI datasets, Pima Indians diabetes dataset ($N=768$ and $d=8$) and Sonar dataset ($N=207$ and $d=60$) with binary labels and a setup similar to \cite{chopin2020introduction} with AIS. We consider a Bayesian logistic regression model with normal prior $p(z) = \mathcal N(0, 5I)$ and likelihood $p(\mathcal D|z) = \prod_n p(y_n|x_n, z)$ for $p(y_n | x_n, z) = \text{Bern}(\sigma(x_n^Tz))$. We use CR-AIS to estimate the log marginal likelihood $\log Z = p(\mathcal D) = \log \int p(z)p(\mathcal D|z)dz$ corresponding to the normalization factor of the posterior distribution of the parameters $\pi(z) = p(z|\mathcal D) \propto p(z)p(\mathcal D|z)$. 

For computation complexity we use a similar measure as described in \Cref{sec:experiment3} and plot the average of estimated $\log Z_\pi$ vs the computation complexity for Pima and Sonar datasets in \Cref{fig:pima-sonar}. The estimation of all samplers converges exponentially as the computation budget increases, while CR-AIS has tighter lower bound estimator in comparison to other samplers, especially when computation budget is limited and it roughly requires $\times 4$ fewer $\tilde \pi$ evaluations for similar performance as Adaptive AIS.  
\begin{figure}[h]
\vskip 0.2in
    \begin{center}
    \centerline{\includegraphics[width=0.45\textwidth]{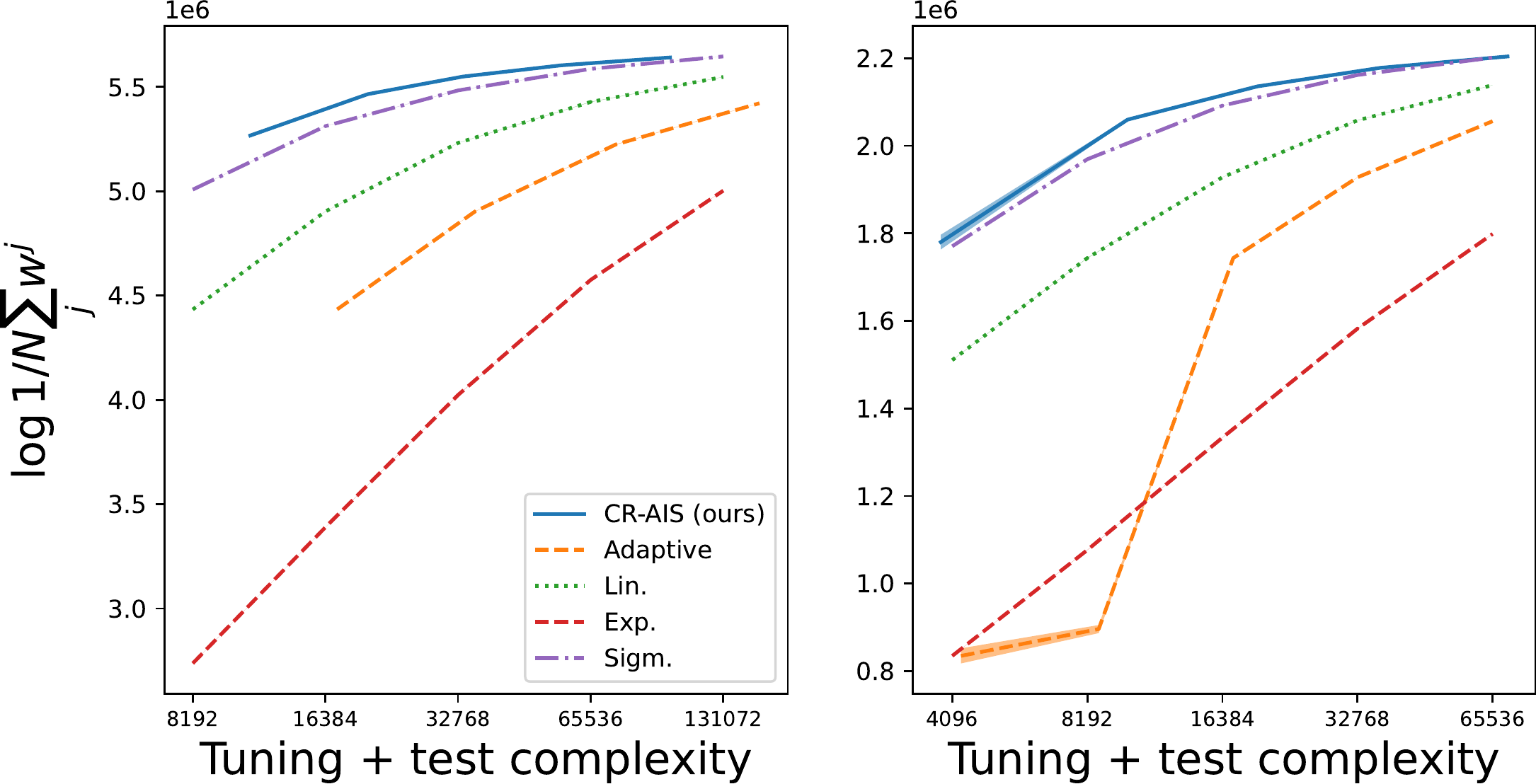}}
        \caption{Log marginal likelihood estimates of Bayesian logistic regression model vs computation complexity for CR-AIS, Adaptive AIS with ESS decrease rate of 0.5 on Pima (Left) and Sonar (Right) datasets.}
    \label{fig:pima-sonar}
    \end{center}
\vskip -0.2in
\end{figure}

\subsection{Latent Variable Model}

We estimate the log marginal likelihood of a Variational AutoEncoder trained on binarized MNIST dataset \citep{salakhutdinov2008quantitative} with a conditional Bernoulli likelihood model. We consider an architecture similar to the one used in \citep{burda2015importance} where the latent variable has $d=50$ dimensions and both the decoder and the encoder are neural networks with 3 fully-connected layers and we use the same set of hyperparameters as in \cref{sec:experiment3}. The estimated lower bound of log marginal likelihood are presented in \cref{tab:vae} for $M$ close to 64 and $M$ close to 512.

With $M$ close to 64, the samplers are far off from the estimation of the variational encoder which is -95.80 nats. The exponential schedule is more accurate than the rest of the samplers. CR-AIS gives a close estimate to it by doubling the computation time and requiring $3\times$ fewer resources in comparison to Adaptive AIS. For the longer sequences with $M$ close to 512, the performance of samplers is harder to distinguish and the gain of tuning diminishes.

\begin{table}[h]
    \caption{Estimated VAE log marginal likelihood with $M$ close to $64$ (Top) and $M$ close to 512 (Bottom). Results are cross validated over $\alpha$. Higher is better.}\label{tab:vae}
    \vskip 0.15in
    \begin{center}
    \begin{small}
    \begin{sc}
    \sisetup{table-align-uncertainty=true, separate-uncertainty=true,}
    \renewrobustcmd{\bfseries}{\fontseries{b}\selectfont}
    \renewrobustcmd{\boldmath}{}
    \setlength\tabcolsep{2pt}
    \begin{tabular}{|c|ccc|}
    \hline
        $\beta$ & Est. err. & $M$ & Comput. \\
    \hline
Lin. & -141.76 $\pm$ 0.07 & 64.0 & 64.0 \\
Sigm. & -130.65 $\pm$ 0.14 & 64.0 & 64.0 \\
Exp. & \bfseries{-121.04} $\pm$ 0.07 & 64.0 & 64.0 \\
Ada. c0.7 & -123.04 $\pm$ 3.66 & 81.6 & 395.6 \\
\hline
CR-AIS (Ours) & -124.63 $\pm$ 0.20 & 63.4 & 126.8 \\
    \hline
    \hline
Lin. & -106.44 $\pm$ 0.04 & 512.0 & 512.0 \\
Sigm. & -104.01 $\pm$ 0.07 & 512.0 & 512.0 \\
Exp. & \bfseries{-102.70} $\pm$ 0.06 & 512.0 & 512.0 \\
Ada. c0.7 & -103.26 $\pm$ 0.20 & 609.7 & 2560.0 \\
\hline
CR-AIS (Ours) & -104.24 $\pm$ 0.11 & 428.6 & 857.2 \\
\hline
    \end{tabular}
    \end{sc}
    \end{small}
    \end{center}
    \vskip -0.1in
\end{table}

\section{Related Work and Discussion}

Our work is similar to the first order optimization methods used to learn generative probability models. Optimizing functional of probability measures has been studied for decades. Functional gradients are computed in the space of probability distributions endowed with Hilbert structure \citep{liu2016stein, liu2017stein, dai2016provable, dai2018learning} or  Wasserstein structure \citep{frogner2020approximate, lin2021wasserstein}. 
Optimization of deep generative models is generalized as linearization of functional gradients  \citep{chu2019probability}. However, their application in bridging distributions has not received sufficient attention. These works focus on developing a Wasserstein gradient flow or particle flow to optimally reduce the objective. By contrast our work focuses on optimization of the intermediary unnormalized densities which are required for annealing in particle methods such as Annealed Stein Variational Gradient Descent \citep{d2021annealed}, Parallel Tempering \citep{earl2005parallel}, or Sequential Monte Carlo \citep{del2006sequential, naesseth2018variational}.

Traditionally, annealing schedule was tuned with ESS or its variants \citep{jasra2011inference, johansen2015towards, elvira2018rethinking}. A variational optimization of the annealing schedule was proposed in \citep{kiwaki2015variational} with fixed point iteration algorithm to minimize the asymptotic estimation bias/variance. In comparison, CR-AIS uses an analytically interpretable schedule and is able to replace the expensive numerical search or optimization loop with a simple Monte Carlo estimation.

Alternatively, many recent works emerge on combination of AIS and filtering with variational inference \citep{naesseth2018variational, maddison2017filtering, arbel2021annealed, thin2021monte} or score based generative models \cite{doucet2022score,doucet2022annealed} to achieve more complex transition kernel and priors and improve posterior approximation. In this line of work, the annealing schedule of a predetermined density path is treated as another set of parameters to optimize using back propagation. This approach results in better amortization for training deep latent variable models and higher log marginal likelihood. End-to-end optimization of the distribution path in AIS has proven to be a more challenging task \cite{zhang2021differentiable} and is not considered to be effective \cite{thin2021monte, geffner2021mcmc, goshtasbpour2023optimization}. Instead, we focus on a greedy optimization approach in terms of the marginal particle distribution divergence with the intended target distribution and we are able to provide a long missing understanding of the reasons underlying the popularity of heuristic annealing paths and demonstrate their limitations due to the greedy nature of the optimization.


\section{Conclusion}

In this work, we study the connection between the geometric density path in AIS and the functional derivative of inverse KL divergence of marginal particle distribution and the target. We prove that the geometric mean path is the solution to an ODE corresponding to the steepest descent direction of this objective. The analysis can be extended to $f$-divergences and the ODE has a closed form solution for $\alpha$-divergences in the form of power mean paths \citep{brekelmans2020annealed}. We derived constant rate schedule and designed an algorithm that achieves comparable results to the traditional adaptive AIS while avoiding the time consuming search procedure for tuning.

While our theory is motivated by reduction of the immediate bias of the log marginal likelihood estimator, the geometric path is not optimal with respect to the overall sampler bias as it doesn't use information from the possible future steps in the updates. Similarly, power mean path does not translate to an optimal end-to-end statistic of AIS importance weights. However, our optimization method is similar to a continuous time version of Adaptive AIS where instead of searching for the next discretization step size, we optimize the succeeding annealing densities in the function space. As a consequence, we provide a better understanding of the performance of the geometric mean heuristic and demonstrate the underlying reason for its suboptimality. 
We hope our work helps future research to develop alternative annealing paths with better end-to-end statistics.

\section*{Acknowledgements} This work is carried out by Shirin Goshtasbpour while supported by the funding from the European Union's Horizon 2020 research and innovation program under the Marie Sklodowska-Curie grant agreement No 813999 for this project.

\bibliography{references}

\begin{thebibliography}{41}
\providecommand{\natexlab}[1]{#1}
\providecommand{\url}[1]{\texttt{#1}}
\expandafter\ifx\csname urlstyle\endcsname\relax
  \providecommand{\doi}[1]{doi: #1}\else
  \providecommand{\doi}{doi: \begingroup \urlstyle{rm}\Url}\fi

\bibitem[Arbel et~al.(2021)Arbel, Matthews, and Doucet]{arbel2021annealed}
Arbel, M., Matthews, A., and Doucet, A.
\newblock Annealed flow transport monte carlo.
\newblock In \emph{International Conference on Machine Learning}, pp.\
  318--330. PMLR, 2021.

\bibitem[Brekelmans et~al.(2020)Brekelmans, Masrani, Bui, Wood, Galstyan,
  Steeg, and Nielsen]{brekelmans2020annealed}
Brekelmans, R., Masrani, V., Bui, T., Wood, F., Galstyan, A., Steeg, G.~V., and
  Nielsen, F.
\newblock Annealed {I}mportance {S}ampling with q-paths.
\newblock \emph{arXiv preprint arXiv:2012.07823}, 2020.

\bibitem[Burda et~al.(2016)Burda, Grosse, and
  Salakhutdinov]{burda2015importance}
Burda, Y., Grosse, R.~B., and Salakhutdinov, R.
\newblock Importance weighted autoencoders.
\newblock In Bengio, Y. and LeCun, Y. (eds.), \emph{4th International
  Conference on Learning Representations, {ICLR} 2016, San Juan, Puerto Rico,
  May 2-4, 2016, Conference Track Proceedings}, 2016.

\bibitem[Chopin et~al.(2020)Chopin, Papaspiliopoulos,
  et~al.]{chopin2020introduction}
Chopin, N., Papaspiliopoulos, O., et~al.
\newblock \emph{An introduction to {S}equential {M}onte {C}arlo}.
\newblock Springer, 2020.

\bibitem[Chu et~al.(2019)Chu, Blanchet, and Glynn]{chu2019probability}
Chu, C., Blanchet, J., and Glynn, P.
\newblock Probability functional descent: A unifying perspective on {G}{A}{N}s,
  variational inference, and reinforcement learning.
\newblock In \emph{International Conference on Machine Learning}, pp.\
  1213--1222. PMLR, 2019.

\bibitem[Dai(2018)]{dai2018learning}
Dai, B.
\newblock \emph{Learning over functions, distributions and dynamics via
  stochastic optimization}.
\newblock PhD thesis, Georgia Institute of Technology, 2018.

\bibitem[Dai et~al.(2016)Dai, He, Dai, and Song]{dai2016provable}
Dai, B., He, N., Dai, H., and Song, L.
\newblock Provable {B}ayesian inference via particle mirror descent.
\newblock In \emph{Artificial Intelligence and Statistics}, pp.\  985--994.
  PMLR, 2016.

\bibitem[D'Angelo \& Fortuin(2021)D'Angelo and Fortuin]{d2021annealed}
D'Angelo, F. and Fortuin, V.
\newblock Annealed {S}tein variational gradient descent.
\newblock \emph{arXiv preprint arXiv:2101.09815}, 2021.

\bibitem[Del~Moral et~al.(2006)Del~Moral, Doucet, and Jasra]{del2006sequential}
Del~Moral, P., Doucet, A., and Jasra, A.
\newblock Sequential {M}onte {C}arlo samplers.
\newblock \emph{Journal of the Royal Statistical Society: Series B (Statistical
  Methodology)}, 68\penalty0 (3):\penalty0 411--436, 2006.

\bibitem[Domke \& Sheldon(2018)Domke and Sheldon]{domke2018importance}
Domke, J. and Sheldon, D.~R.
\newblock Importance weighting and variational inference.
\newblock \emph{Advances in neural information processing systems}, 31, 2018.

\bibitem[Doucet et~al.(2022{\natexlab{a}})Doucet, Grathwohl, Matthews, and
  Strathmann]{doucet2022score}
Doucet, A., Grathwohl, W., Matthews, A.~G., and Strathmann, H.
\newblock Score-based diffusion meets annealed importance sampling.
\newblock \emph{Advances in Neural Information Processing Systems},
  35:\penalty0 21482--21494, 2022{\natexlab{a}}.

\bibitem[Doucet et~al.(2022{\natexlab{b}})Doucet, Grathwohl, Matthews, and
  Strathmann]{doucet2022annealed}
Doucet, A., Grathwohl, W.~S., Matthews, A. G. d.~G., and Strathmann, H.
\newblock Annealed importance sampling meets score matching.
\newblock In \emph{ICLR Workshop on Deep Generative Models for Highly
  Structured Data}, 2022{\natexlab{b}}.

\bibitem[Earl \& Deem(2005)Earl and Deem]{earl2005parallel}
Earl, D.~J. and Deem, M.~W.
\newblock Parallel tempering: Theory, applications, and new perspectives.
\newblock \emph{Physical Chemistry Chemical Physics}, 7\penalty0 (23):\penalty0
  3910--3916, 2005.

\bibitem[Elvira et~al.(2018)Elvira, Martino, and Robert]{elvira2018rethinking}
Elvira, V., Martino, L., and Robert, C.~P.
\newblock Rethinking the effective sample size.
\newblock \emph{arXiv preprint arXiv:1809.04129}, 2018.

\bibitem[Frogner \& Poggio(2020)Frogner and Poggio]{frogner2020approximate}
Frogner, C. and Poggio, T.
\newblock Approximate inference with wasserstein gradient flows.
\newblock In \emph{International Conference on Artificial Intelligence and
  Statistics}, pp.\  2581--2590. PMLR, 2020.

\bibitem[Geffner \& Domke(2021)Geffner and Domke]{geffner2021mcmc}
Geffner, T. and Domke, J.
\newblock {MCMC} variational inference via uncorrected hamiltonian annealing.
\newblock \emph{Advances in Neural Information Processing Systems},
  34:\penalty0 639--651, 2021.

\bibitem[Gelman \& Meng(1998)Gelman and Meng]{gelman1998simulating}
Gelman, A. and Meng, X.-L.
\newblock Simulating normalizing constants: From importance sampling to bridge
  sampling to path sampling.
\newblock \emph{Statistical science}, pp.\  163--185, 1998.

\bibitem[Goshtasbpour \& Perez-Cruz(2023)Goshtasbpour and
  Perez-Cruz]{goshtasbpour2023optimization}
Goshtasbpour, S. and Perez-Cruz, F.
\newblock Optimization of annealed importance sampling hyperparameters.
\newblock In \emph{Machine Learning and Knowledge Discovery in Databases:
  European Conference, ECML PKDD 2022, Grenoble, France, September 19--23,
  2022, Proceedings, Part V}, pp.\  174--190. Springer, 2023.

\bibitem[Grosse et~al.(2013)Grosse, Maddison, and
  Salakhutdinov]{grosse2013annealing}
Grosse, R.~B., Maddison, C.~J., and Salakhutdinov, R.
\newblock Annealing between distributions by averaging moments.
\newblock In \emph{NIPS}, pp.\  2769--2777. Citeseer, 2013.

\bibitem[Grosse et~al.(2015)Grosse, Ghahramani, and
  Adams]{grosse2015sandwiching}
Grosse, R.~B., Ghahramani, Z., and Adams, R.~P.
\newblock Sandwiching the marginal likelihood using bidirectional {M}onte
  {C}arlo.
\newblock \emph{CoRR}, abs/1511.02543, 2015.
\newblock URL \url{http://arxiv.org/abs/1511.02543}.

\bibitem[Grosse et~al.(2016)Grosse, Ancha, and Roy]{grosse2016measuring}
Grosse, R.~B., Ancha, S., and Roy, D.~M.
\newblock Measuring the reliability of {MCMC} inference with bidirectional
  {M}onte {C}arlo.
\newblock \emph{Advances in Neural Information Processing Systems}, 29, 2016.

\bibitem[Jasra et~al.(2011)Jasra, Stephens, Doucet, and
  Tsagaris]{jasra2011inference}
Jasra, A., Stephens, D.~A., Doucet, A., and Tsagaris, T.
\newblock Inference for {L}{\'e}vy-driven stochastic volatility models via
  adaptive {S}equential {M}onte {C}arlo.
\newblock \emph{Scandinavian Journal of Statistics}, 38\penalty0 (1):\penalty0
  1--22, 2011.

\bibitem[Johansen et~al.(2015)Johansen, Aston, and Zhou]{johansen2015towards}
Johansen, A.~M., Aston, J.~A., and Zhou, Y.
\newblock Towards automatic model comparison: An adaptive {S}equential {M}onte
  {C}arlo approach.
\newblock 2015.

\bibitem[Kiwaki(2015)]{kiwaki2015variational}
Kiwaki, T.
\newblock Variational optimization of annealing schedules.
\newblock \emph{arXiv preprint arXiv:1502.05313}, 2015.

\bibitem[Kong(1992)]{kong1992note}
Kong, A.
\newblock A note on importance sampling using standardized weights.
\newblock \emph{University of Chicago, Dept. of Statistics, Tech. Rep}, 348,
  1992.

\bibitem[Lin et~al.(2021)Lin, Li, Osher, and Mont{\'u}far]{lin2021wasserstein}
Lin, A.~T., Li, W., Osher, S., and Mont{\'u}far, G.
\newblock Wasserstein proximal of {G}{A}{N}s.
\newblock In \emph{International Conference on Geometric Science of
  Information}, pp.\  524--533. Springer, 2021.

\bibitem[Liu(2017)]{liu2017stein}
Liu, Q.
\newblock Stein variational gradient descent as gradient flow.
\newblock \emph{Advances in neural information processing systems}, 30, 2017.

\bibitem[Liu \& Wang(2016)Liu and Wang]{liu2016stein}
Liu, Q. and Wang, D.
\newblock Stein variational gradient descent: A general purpose {B}ayesian
  inference algorithm.
\newblock \emph{Advances in neural information processing systems}, 29, 2016.

\bibitem[Maddison et~al.(2017)Maddison, Lawson, Tucker, Heess, Norouzi, Mnih,
  Doucet, and Teh]{maddison2017filtering}
Maddison, C.~J., Lawson, J., Tucker, G., Heess, N., Norouzi, M., Mnih, A.,
  Doucet, A., and Teh, Y.
\newblock Filtering variational objectives.
\newblock \emph{Advances in Neural Information Processing Systems}, 30, 2017.

\bibitem[Masrani et~al.(2019)Masrani, Le, and Wood]{masrani2019thermodynamic}
Masrani, V., Le, T.~A., and Wood, F.
\newblock The thermodynamic variational objective.
\newblock \emph{Advances in Neural Information Processing Systems}, 32, 2019.

\bibitem[Masrani et~al.(2021)Masrani, Brekelmans, Bui, Nielsen, Galstyan,
  Ver~Steeg, and Wood]{masrani2021q}
Masrani, V., Brekelmans, R., Bui, T., Nielsen, F., Galstyan, A., Ver~Steeg, G.,
  and Wood, F.
\newblock q-paths: Generalizing the geometric annealing path using power means.
\newblock In \emph{Uncertainty in Artificial Intelligence}, pp.\  1938--1947.
  PMLR, 2021.

\bibitem[Naesseth et~al.(2018)Naesseth, Linderman, Ranganath, and
  Blei]{naesseth2018variational}
Naesseth, C., Linderman, S., Ranganath, R., and Blei, D.
\newblock Variational {S}equential {M}onte {C}arlo.
\newblock In \emph{International conference on artificial intelligence and
  statistics}, pp.\  968--977. PMLR, 2018.

\bibitem[Neal(1996)]{neal1996sampling}
Neal, R.~M.
\newblock Sampling from multimodal distributions using tempered transitions.
\newblock \emph{Statistics and computing}, 6\penalty0 (4):\penalty0 353--366,
  1996.

\bibitem[Neal(2001)]{neal2001annealed}
Neal, R.~M.
\newblock {A}nnealed {I}mportance {S}ampling.
\newblock \emph{Statistics and computing}, 11\penalty0 (2):\penalty0 125--139,
  2001.

\bibitem[Ogata(1989)]{ogata1989monte}
Ogata, Y.
\newblock A {M}onte {C}arlo method for high dimensional integration.
\newblock \emph{Numerische Mathematik}, 55\penalty0 (2):\penalty0 137--157,
  1989.

\bibitem[Rezende et~al.(2014)Rezende, Mohamed, and
  Wierstra]{rezende2014stochastic}
Rezende, D.~J., Mohamed, S., and Wierstra, D.
\newblock Stochastic backpropagation and approximate inference in deep
  generative models.
\newblock In \emph{International conference on machine learning}, pp.\
  1278--1286. PMLR, 2014.

\bibitem[Salakhutdinov \& Murray(2008)Salakhutdinov and
  Murray]{salakhutdinov2008quantitative}
Salakhutdinov, R. and Murray, I.
\newblock On the quantitative analysis of deep belief networks.
\newblock In \emph{Proceedings of the 25th international conference on Machine
  learning}, pp.\  872--879, 2008.

\bibitem[Thin et~al.(2021)Thin, Kotelevskii, Doucet, Durmus, Moulines, and
  Panov]{thin2021monte}
Thin, A., Kotelevskii, N., Doucet, A., Durmus, A., Moulines, E., and Panov, M.
\newblock Monte {C}arlo variational auto-encoders.
\newblock In \emph{International Conference on Machine Learning}, pp.\
  10247--10257. PMLR, 2021.

\bibitem[Wu et~al.(2020)Wu, K{\"o}hler, and No{\'e}]{wu2020stochastic}
Wu, H., K{\"o}hler, J., and No{\'e}, F.
\newblock Stochastic normalizing flows.
\newblock \emph{arXiv preprint arXiv:2002.06707}, 2020.

\bibitem[Wu et~al.(2017)Wu, Burda, Salakhutdinov, and Grosse]{wu2017on}
Wu, Y., Burda, Y., Salakhutdinov, R., and Grosse, R.
\newblock On the quantitative analysis of decoder-based generative models.
\newblock In \emph{International Conference on Learning Representations}, 2017.

\bibitem[Zhang et~al.(2021)Zhang, Hsu, Li, Finn, and
  Grosse]{zhang2021differentiable}
Zhang, G., Hsu, K., Li, J., Finn, C., and Grosse, R.~B.
\newblock Differentiable annealed importance sampling and the perils of
  gradient noise.
\newblock \emph{Advances in Neural Information Processing Systems},
  34:\penalty0 19398--19410, 2021.

\end{thebibliography}
\bibliographystyle{icml2023}

\newpage
\appendix
\onecolumn
\section{Proofs}\label{sec:app:proofs}
Here, we provide the proofs of the lemmas and propositions in the paper.

\begin{lemma}
    Let $\phi_{t+\eps}(z) = \phi_t(z) + \eps \eta(z)$ for $\phi_t(z) = \log \tilde q_t(z)$ where $\tilde q_t(z)$ and $\tilde \pi(z)$ are positive unnormalized density functions. Then we have,
    \begin{align}\label{p:kl-derivative}
        \at{\frac{d}{d\eps}J_\textup{KL}[\phi_{t+\eps}]}{\eps=0} = &\textup{Cov}_{q_t}\left[\eta(z), \log \frac{\tilde q_t(z)}{\tilde \pi(z)}\right],
    \end{align}
    where $\textup{Cov}_q[\cdot,\cdot]$ is the covariance under distribution of $q$ and we use the definition of G\^{a}teaux differential for the derivative,
    \begin{align}
        \at{\frac{d}{d\eps}J_\textup{KL}[\phi_{t+\eps}]}{\eps=0} = &\lim_{\eps \to 0^+} \frac{J_\textup{KL}[\phi_t(z) + \eps \eta(z)] - J_\textup{KL}[\phi_t]}{\eps}. \nonumber
    \end{align}
\end{lemma}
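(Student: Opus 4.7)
The plan is a direct Gâteaux-derivative calculation. Write $\tilde q_{t+\epsilon}(z) = e^{\phi_t(z) + \epsilon \eta(z)} = \tilde q_t(z)\, e^{\epsilon \eta(z)}$, and split the functional into a ratio piece and a log-normalizer piece:
\begin{equation*}
J_\text{KL}[\phi_{t+\epsilon}] = \frac{N(\epsilon)}{D(\epsilon)} + \log Z_\pi - \log D(\epsilon),
\end{equation*}
where $D(\epsilon) = \int \tilde q_t(z) e^{\epsilon \eta(z)} dz$ and $N(\epsilon) = \int \tilde q_t(z) e^{\epsilon \eta(z)}[\phi_t(z) + \epsilon\eta(z) - \log\tilde\pi(z)]\, dz$. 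Under the positivity assumption on $\tilde q_t, \tilde\pi$ and smoothness of $\eta$, differentiation under the integral sign is justified, so I can proceed by the quotient rule.

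Next I would compute the four objects $D(0), D'(0), N(0), N'(0)$. Using $q_t = \tilde q_t / Z_t$ with $Z_t = D(0)$, these evaluate to $D(0) = Z_t$, $D'(0) = Z_t\,\mathbb E_{q_t}[\eta]$, $N(0) = Z_t\,\mathbb E_{q_t}[\log(\tilde q_t/\tilde\pi)]$, and, after differentiating both the exponential weight $e^{\epsilon\eta}$ and the explicit $\epsilon\eta$ inside the bracket,
\begin{equation*}
N'(0) = Z_t\,\mathbb E_{q_t}\!\left[\eta \log\tfrac{\tilde q_t}{\tilde\pi}\right] + Z_t\,\mathbb E_{q_t}[\eta].
\end{equation*}
The quotient rule then gives
\begin{equation*}
\left.\tfrac{d}{d\epsilon}\tfrac{N}{D}\right|_{\epsilon=0} = \mathbb E_{q_t}\!\left[\eta\log\tfrac{\tilde q_t}{\tilde\pi}\right] + \mathbb E_{q_t}[\eta] - \mathbb E_{q_t}\!\left[\log\tfrac{\tilde q_t}{\tilde\pi}\right]\mathbb E_{q_t}[\eta] = \textup{Cov}_{q_t}\!\left[\eta,\log\tfrac{\tilde q_t}{\tilde\pi}\right] + \mathbb E_{q_t}[\eta].
\end{equation*}

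Finally, the log-normalizer contribution is $\left.\tfrac{d}{d\epsilon}(-\log D)\right|_{\epsilon=0} = -D'(0)/D(0) = -\mathbb E_{q_t}[\eta]$. Adding the two contributions, the $\mathbb E_{q_t}[\eta]$ terms cancel exactly, yielding the claimed covariance. This cancellation is the conceptual point of the lemma: shifting $\eta$ by a constant changes only the normalization of $\tilde q_{t+\epsilon}$ and therefore cannot affect $J_\text{KL}$, which is why the derivative depends on $\eta$ only through its centered part.

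I do not anticipate a real obstacle; the calculation is routine and the main care is justifying the exchange of differentiation and integration. That follows from the positivity of $\tilde q_t$ together with local boundedness of $\eta$ and $\log(\tilde q_t/\tilde\pi)$ on the support, which lets one dominate $\partial_\epsilon[\tilde q_t e^{\epsilon\eta}(\log\tilde q_t + \epsilon\eta - \log\tilde\pi)]$ uniformly in a neighborhood of $\epsilon = 0$, so dominated convergence applies. No other subtlety is needed, and the identical template will later extend verbatim to the $f$-divergence case in Lemma~\ref{lem:f-derivative} by replacing $\log(\tilde q_t/\tilde\pi)$ with the appropriate $-g(u_t)$.
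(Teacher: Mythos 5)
Your proposal is correct and follows essentially the same route as the paper: the paper likewise differentiates the decomposition of $J_\text{KL}$ into the ratio term and the $-\log\int\tilde q_t$ term, obtaining exactly your four contributions (the two $\mathbb E_{q_t}[\eta]$ terms cancelling and the remaining two forming the covariance). Your added remarks on justifying differentiation under the integral sign and on the constant-shift invariance go slightly beyond the paper's terse computation but do not change the argument.
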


\begin{proof}
    We take the derivative as follows,
    \begin{align}
    \at{\frac{d}{d\eps}J_\text{KL}[\phi_{t+\eps}]}{\eps=0} = &\frac{\int \tilde q_t(z) \eta(z) \log \frac{\tilde q_t(z)}{\tilde \pi(z)} dz}{\int \tilde q_t(z) dz}+ \frac{\int \tilde q_t(z) \eta(z) dz}{\int \tilde q_t(z) dz}\nonumber\\
    &- \frac{\int \tilde q_t(z) \log \frac{\tilde q_t(z)}{\tilde \pi(z)} dz \int \tilde q_t(z) \eta(z) dz}{\int \tilde q_t(z) dz\int \tilde q_t(z) dz}- \frac{\int \tilde q_t(z) \eta(z) dz}{\int \tilde q_t(z) dz}\nonumber\\
    = &\text{Cov}_{q_t}\left[\eta(z), \log \frac{\tilde q_t(z)}{\tilde \pi(z)}\right],\nonumber
\end{align}
concluding the proof.
\end{proof}

\begin{proposition}
    Assume the same conditions as in \Cref{lem:kl-derivative}. Additionally, consider the set of smooth perturbation directions with bounded variance \begin{align}
        \mathcal M_{q_t,\pi}\coloneqq \{\eta\in \mathcal C^1: \textup{Var}_{q_t}[\eta(z)]\leq c_{q_t,\pi}^\text{KL}\},\nonumber
    \end{align}
    for $B \geq 0$ and $c_{q_t,\pi}^\text{KL} = B/\textup{Var}_{q_t}[\log (\tilde \pi(z)/\tilde q_t(z))]$.
    Then the steepest descent direction that minimizes the derivative in \Cref{eq:kl-derivative} in $\mathcal M_{q_t,\pi}$ is
    \begin{align}\label{p:etaopt-kl}
        \eta^*_{q_t,\pi}(z) = \frac {c_{q_t,\pi}^\text{KL}}{\sqrt B} \log\frac{\tilde\pi(z)}{\tilde q_t(z)} + b,
    \end{align}
    for arbitrary $b\in\R$. A solution to the Ordinary Differential Equation (ODE) $\frac{d}{dt}\phi_t(z) = \eta^*_{q_t, \pi}(z)$ with initial condition $\phi_0(z) = \log \tilde q_0(z)$ is the scaled geometric mean path and results in constant rate decrease in the inverse KL divergence in \Cref{eq:kl}.
\end{proposition}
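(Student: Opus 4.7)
The plan is to break the proposition into three claims and tackle them in sequence: (i) identify $\eta^*_{q_t,\pi}$ as the minimizer of the covariance derivative under the variance constraint, (ii) verify that a time-reparametrized geometric path satisfies the induced ODE, and (iii) compute the rate of decrease and solve for $\beta$.

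For (i), the directional derivative obtained in \Cref{lem:kl-derivative} is the covariance $\textup{Cov}_{q_t}[\eta, \log(\tilde q_t/\tilde \pi)]$, and the admissible set $\mathcal{M}_{q_t,\pi}$ constrains only the $q_t$-variance of $\eta$. I would invoke Cauchy--Schwarz for covariances: $|\textup{Cov}_{q_t}[\eta, f]| \leq \sqrt{\textup{Var}_{q_t}[\eta]\textup{Var}_{q_t}[f]}$ with equality iff $\eta$ is an affine function of $f$. Taking $f = \log(\tilde q_t/\tilde \pi)$ and forcing the negative sign in the extremum, the minimizing $\eta$ is $a\log(\tilde\pi/\tilde q_t)+b$ with $a>0$. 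The variance constraint saturates at $a^2\textup{Var}_{q_t}[\log(\tilde\pi/\tilde q_t)] = c_{q_t,\pi}^{\text{KL}}$, and substituting $c_{q_t,\pi}^{\text{KL}} = B/\textup{Var}_{q_t}[\log(\tilde\pi/\tilde q_t)]$ gives $a = c_{q_t,\pi}^{\text{KL}}/\sqrt{B}$, yielding \eqref{eq:etaopt-kl}. The arbitrary additive constant $b$ reflects the invariance of the optimization under adding a function of $t$ alone, which only rescales $\tilde q_t$ and does not change $q_t$.

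For (ii), I would substitute the candidate $\phi_t(z) = \log \tilde q^\text{geom}_{1-\beta(t)}(z) = \beta(t)\log q_0(z) + (1-\beta(t))\log \tilde\pi(z)$ into the ODE. The left-hand side is $\dot\beta(t)\log(q_0/\tilde\pi)$. On the right-hand side, a short computation gives $\log(\tilde\pi/\tilde q_t) = \beta(t)\log(\tilde\pi/q_0)$, so $\eta^*_{q_t,\pi}(z)$ equals $(c_{q_t,\pi}^{\text{KL}}/\sqrt{B})\beta(t)\log(\tilde\pi/q_0)$ up to the free constant $b$. Matching the two sides yields the scalar ODE $\dot\beta(t)/\beta(t) = -c_{q_t,\pi}^{\text{KL}}/\sqrt{B}$ with $\beta(0)=1$, whose solution is exactly $\beta^{\text{KL}}(t)$ in \eqref{eq:betageom}. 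Note that $c_{q_r,\pi}^{\text{KL}}$ depends implicitly on $\beta$ through $q_r$, so this is a coupled but well-posed scalar ODE along the geometric family.

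For (iii), I would differentiate $J_\text{KL}[\phi_t]$ along the flow by invoking \Cref{lem:kl-derivative} with $\eta$ replaced by $\dot\phi_t = \eta^*_{q_t,\pi}$. Because the covariance is unaffected by the constant $b$, the rate becomes $-(c_{q_t,\pi}^{\text{KL}}/\sqrt{B})\textup{Var}_{q_t}[\log(\tilde\pi/\tilde q_t)]$, which by definition of $c_{q_t,\pi}^{\text{KL}}$ collapses to the constant $-\sqrt{B}$ independent of $t$, establishing the constant rate claim. The main obstacle I anticipate is the conceptual one of justifying that $\frac{d}{dt}J_\text{KL}[\phi_t]$ along a genuine time evolution can be read off from the G\^ateaux differential of \Cref{lem:kl-derivative} (which was defined for a fixed-$t$ perturbation by a direction $\eta$): this requires observing that the first-order Taylor expansion $\phi_{t+dt} = \phi_t + \dot\phi_t\,dt + o(dt)$ reduces the two notions of derivative to the same covariance expression, granted enough regularity of $\tilde q_t$ and $\tilde \pi$ to interchange limits and integrals. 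The optimization in (i) is otherwise a one-line Cauchy--Schwarz argument and the ODE verification in (ii) is direct substitution, so the only substantive step is ensuring the validity of this chain-rule identification.
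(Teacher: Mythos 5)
Your proposal is correct and follows essentially the same route as the paper's proof: Cauchy--Schwarz on the covariance to obtain the saturating direction $\eta^*_{q_t,\pi}$, verification that the $\beta$-reparametrized geometric path (up to a $z$-independent scale absorbed by $b$) solves the ODE with $\dot\beta/\beta = -c^{\text{KL}}_{q_t,\pi}/\sqrt{B}$, and the observation that the resulting derivative of $J_\text{KL}$ collapses to the constant $-\sqrt{B}$. The only cosmetic difference is that the paper writes down the general solution of the linear ODE (including the $\beta(t)\int_0^t b(r)/\beta(r)\,dr$ term) whereas you verify the geometric ansatz by direct substitution; both are equivalent, and your explicit treatment of the chain-rule identification in step (iii) fills in a detail the paper leaves implicit.
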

\begin{proof}
    It is straight forward to derive the equation for the steepest descent direction using Cauchy-Schwarz inequality. The solution to the ODE is given by
    \begin{align}\label{p:geom}
        \phi_t(z) = &\beta(t) \log \tilde q_0(z) + (1 - \beta(t))\log \tilde \pi(z) \nonumber\\
        &+ \beta(t)\int_0^t \frac{b(r)}{\beta(r)}dr,
    \end{align}
    which is equivalent to $\log \tilde q^\text{geom}_{1 - \beta(t)}(z)$ plus a $z$-independent scale for $t\geq 0$ and for $\beta(t)$ set as 
    \begin{align}\label{p:betageom}
        \beta^\text{KL}(t) \coloneqq e^{-\int_0^tc_{q_r,\pi}^\text{KL}dr/\sqrt B},
    \end{align}
    it leads in constant derivative value form \Cref{eq:kl-derivative}.
\end{proof}

\begin{lemma}
    Let $\phi_{t+\eps(z)} = \phi_t(z) + \eps \eta(z)$ for $\phi_t(z) = \log \tilde q_t(z)$ where $\tilde q_t(z)$ and $\tilde\pi(z)$ are positive unnormalized density functions and let $f:\R\to\R$ be convex and differentiable. Then we have,
    \begin{align}
        \at{\frac{d}{d\eps}J_f[\phi_{t+\eps}]}{\eps=0} = &\textup{Cov}_{q_t}\left[\eta(z), -g(u_t(z))\right],\nonumber
    \end{align}
    where $g(u) = u\dot f(u)-f(u) $ and $\dot f(u) = df(u)/du$. Moreover, consider the set of smooth perturbation directions with bounded variance $\mathcal M^f_{q_t,\pi}\coloneqq \left\{\eta\in \mathcal C^1: \textup{Var}_{q_t}[\eta(z)]\leq c_{q_t,\pi}^f\right\}$ for $B \geq 0$ and
    \begin{align}
        c_{q_t,\pi}^f = B/\textup{Var}_{q_t}\left[g(u_t(z))\right].\nonumber
    \end{align}
    Then the steepest descent direction that minimizes this derivative in $\mathcal M_{q_t,\pi}^f$ is
    \begin{align}
        \eta^*_{q_t,\pi}(z) = \frac{c_{q_t,\pi}^f}{\sqrt B}g(u_t(z)) + b,\nonumber
    \end{align}
    for arbitrary $b\in\R$.
\end{lemma}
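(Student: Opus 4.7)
The argument splits into two independent pieces: (i) compute the G\^{a}teaux differential of $J_f$ in the direction $\eta$, showing it equals $\mathrm{Cov}_{q_t}[\eta,-g(u_t)]$; and (ii) minimize that covariance over the variance-constrained ball $\mathcal M_{q_t,\pi}^f$ by Cauchy--Schwarz. Both pieces parallel the KL case in \Cref{lem:kl-derivative} and \Cref{theo:geom-steepest}, with $\log(\tilde q_t/\tilde\pi)$ replaced by $-g(u_t)$.

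\textbf{Step 1: rewrite $J_f$ in normalized form.} Since $\tilde q_{t+\eps}(z)=\tilde q_t(z)e^{\eps\eta(z)}$, with $Z_{t+\eps}=\int\tilde q_t(z)e^{\eps\eta(z)}dz$, I will write
\begin{align}
J_f[\phi_{t+\eps}]=\int q_{t+\eps}(z)\,f(u_{t+\eps}(z))\,dz,\nonumber
\end{align}
with $q_{t+\eps}=\tilde q_{t+\eps}/Z_{t+\eps}$ and $u_{t+\eps}(z)=u_t(z)\,(Z_{t+\eps}/Z_t)\,e^{-\eps\eta(z)}$ (obtained by substituting $\mathbb E_{q_{t+\eps}}[\tilde\pi/\tilde q_{t+\eps}]=Z_\pi/Z_{t+\eps}$ into the definition of $u$).

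\textbf{Step 2: differentiate and collect.} A direct computation gives the two elementary derivatives at $\eps=0$:
\begin{align}
\partial_\eps q_{t+\eps}(z)\big|_0 &= q_t(z)\bigl(\eta(z)-\mathbb E_{q_t}[\eta]\bigr),\nonumber\\
\partial_\eps u_{t+\eps}(z)\big|_0 &= u_t(z)\bigl(\mathbb E_{q_t}[\eta]-\eta(z)\bigr).\nonumber
\end{align}
Combining by the product rule,
\begin{align}
\partial_\eps J_f[\phi_{t+\eps}]\big|_0 = \int q_t(z)\bigl(\eta(z)-\mathbb E_{q_t}[\eta]\bigr)\bigl[f(u_t(z))-u_t(z)\dot f(u_t(z))\bigr]dz.\nonumber
\end{align}
The bracket equals $-g(u_t(z))$ by definition of $g$, and the prefactor is a centered version of $\eta$, so the right-hand side is exactly $\mathrm{Cov}_{q_t}[\eta,-g(u_t)]$. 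This is the first claim.

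\textbf{Step 3: steepest descent via Cauchy--Schwarz.} For the second claim, I want to minimize (make most negative) $\mathrm{Cov}_{q_t}[\eta,-g(u_t)]$ over $\eta\in\mathcal M_{q_t,\pi}^f$. Cauchy--Schwarz gives
\begin{align}
\mathrm{Cov}_{q_t}[\eta,-g(u_t)]\ \geq\ -\sqrt{\mathrm{Var}_{q_t}[\eta]\,\mathrm{Var}_{q_t}[g(u_t)]},\nonumber
\end{align}
with equality iff $\eta-\mathbb E_{q_t}[\eta]=\lambda\,g(u_t)$ for some $\lambda>0$ (the positive sign is what makes the covariance negative, i.e.\ makes $\eta$ a descent direction). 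Saturating the variance constraint $\mathrm{Var}_{q_t}[\eta]=c_{q_t,\pi}^f$ pins down $\lambda^2=(c_{q_t,\pi}^f)^2/B$ after substituting $\mathrm{Var}_{q_t}[g(u_t)]=B/c_{q_t,\pi}^f$, giving $\lambda=c_{q_t,\pi}^f/\sqrt B$ and the stated form of $\eta^*_{q_t,\pi}$, with $b$ free because a constant shift does not affect either the covariance or the variance constraint.

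\textbf{Main obstacle.} Nothing here is deep; the one bookkeeping point to be careful about is the dependence of $u_{t+\eps}$ on $\eps$ through both the numerator $\tilde\pi/\tilde q_{t+\eps}$ and the denominator $\mathbb E_{q_{t+\eps}}[\tilde\pi/\tilde q_{t+\eps}]$. Differentiating these together produces the cancellation $f(u)-u\dot f(u)=-g(u)$, which is what makes the formula clean; getting the signs and the $\mathbb E_{q_t}[\eta]$ term to align correctly is the only spot that invites arithmetic errors, and exchanging $\partial_\eps$ with the integral is justified by the standing positivity/smoothness assumptions on $\tilde q_t$, $\tilde\pi$, $\eta$, and $f$.
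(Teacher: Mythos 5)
Your proposal is correct and follows essentially the same route as the paper: both compute the G\^{a}teaux differential directly by tracking how $u_t$ transforms under the perturbation ($u_{t+\eps}=u_t\,e^{-\eps\eta}\,\mathbb E_{q_t}[e^{\eps\eta}]$), collect the terms into $\mathrm{Cov}_{q_t}[\eta,-g(u_t)]$ via the identity $f(u)-u\dot f(u)=-g(u)$, and then obtain the minimizer by Cauchy--Schwarz on the variance-constrained set. If anything, your Step 3 is more explicit than the paper's one-line appeal to Cauchy--Schwarz, since you spell out the equality condition and the normalization $\lambda=c_{q_t,\pi}^f/\sqrt B$.
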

\begin{proof}
    With the perturbed negative energy
    \begin{align}
        \phi_{t+\eps}(z) = \log q_{t+\eps}(z) = \phi_t(z) + \eps\eta(z)\nonumber
    \end{align}
    in direction $\eta$, $u_t(z)$ is updated to $u_t(z)e^{-\eps\eta(z)}\E_{q_t}[e^{\eps\eta(z)}]$. Consequently,
\begin{align}
    \at{\frac{d}{d\eps}J_f[\phi_{t+\eps}]}{\eps=0} = &\frac{\int \tilde q_t(z)\eta(z) f\left(u_t(z)\right)dz}{\int \tilde q_t(z) dz} + \frac{\int \tilde q_t(z) \frac{d}{du}f\left(u_t(z)\right)u_t(z)\E_{q_t}[\eta(z)] dz}{\int \tilde q_t(z) dz}\nonumber\\
    &- \frac{\int \tilde q_t(z) \frac{d}{du}f\left(u_t(z)\right)u_t(z)\eta(z) dz}{\int \tilde q_t(z) dz} - \frac{\int \tilde q_t(z) f\left(u_t(z)\right) dz}{\int \tilde q_t(z) dz}.\frac{\int \tilde q_t(z) \eta(z) dz}{\int \tilde q_t(z) dz}\nonumber\\
    = &\text{Cov}_{q_t}\left[\eta(z), -g(u_t(z))\right].\nonumber
\end{align}
    The rest of the proof follows due to Cauchy-Schwarz inequality.
\end{proof}

\begin{proposition}
    Assume the same conditions as in \Cref{lem:f-derivative} and $f(u) = (u^\alpha - 1 - \alpha(u-1))/\alpha(\alpha-1)$ for $\alpha \centernot\in \{0,1\}$ or $f(u) = u\log u$ for $\alpha=1$. Then $\alpha$-power mean path is the solution to the ODE $\frac{d}{dt}\phi_t(z) = \eta^*_{q_t, \pi}(z)$ with initial condition $\phi_0(z) = \log \tilde q_0(z)$ and with a particular schedule it results in constant rate decrease in $f$-divergence.
\end{proposition}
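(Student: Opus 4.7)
The plan is to verify directly that the ansatz $\phi_t(z)=\log\tilde q^{\alpha\text{-pow}}_{1-\beta(t)}(z)$ satisfies the ODE $\dot\phi_t=\eta^*_{q_t,\pi}$ given by \Cref{lem:f-derivative}, and then read off the schedule $\beta(t)$ that achieves constant rate. First I would specialise the formula $g(u)=u\dot f(u)-f(u)$ to the proposition's $f$: a short calculation gives $g(u)=(u^\alpha-1)/\alpha$ for $\alpha\notin\{0,1\}$ and $g(u)=u$ for $\alpha=1$, so in both cases $g(u_t(z))=\frac{1}{\alpha}\bigl[(\tilde\pi(z)/\tilde q_t(z))^\alpha(Z_{q_t}/Z_\pi)^\alpha-1\bigr]$ after substituting $u_t(z)=\tilde\pi(z)/\tilde q_t(z)\cdot Z_{q_t}/Z_\pi$ (the ratio $Z_{q_t}/Z_\pi$ comes from $\E_{q_t}[\tilde\pi/\tilde q_t]$).

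Next I would differentiate the ansatz $\phi_t(z)=\frac1\alpha\log\!\bigl((1-\beta(t))\tilde\pi(z)^\alpha+\beta(t)\tilde q_0(z)^\alpha\bigr)$ with respect to $t$, obtaining
\begin{align}
\frac{d}{dt}\phi_t(z)=\frac{\dot\beta(t)}{\alpha}\cdot\frac{\tilde q_0(z)^\alpha-\tilde\pi(z)^\alpha}{\tilde q_t(z)^\alpha}.\nonumber
\end{align}
The key algebraic step is to use the definition of the power mean itself, $\tilde q_t(z)^\alpha=(1-\beta(t))\tilde\pi(z)^\alpha+\beta(t)\tilde q_0(z)^\alpha$, to eliminate $\tilde q_0^\alpha$ in favour of $\tilde q_t^\alpha$ and $\tilde\pi^\alpha$. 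After simplification the right-hand side reduces to $-\frac{\dot\beta(t)}{\alpha\beta(t)}\bigl[\tilde\pi(z)^\alpha/\tilde q_t(z)^\alpha-1\bigr]$, which has exactly the same functional dependence on $z$ as $\eta^*_{q_t,\pi}(z)=\frac{c^f_{q_t,\pi}}{\sqrt B}g(u_t(z))+b$.

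Matching the $z$-dependent coefficient of $\tilde\pi^\alpha/\tilde q_t^\alpha$ on the two sides yields the scalar ODE
\begin{align}
-\frac{\dot\beta(t)}{\beta(t)}=\frac{c^f_{q_t,\pi}\,Z_{q_t}^\alpha}{\sqrt B\,Z_\pi^\alpha},\nonumber
\end{align}
whose solution with $\beta(0)=1$ (so that the initial condition $\phi_0=\log\tilde q_0$ holds) is precisely $\beta^\alpha(t)$ as stated in \Cref{eq:beta}; the residual $z$-independent discrepancy between the two sides is then absorbed into the free constant $b$ guaranteed by \Cref{lem:f-derivative}.

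Finally, for the constant rate claim, I would substitute $\eta^*$ into the derivative formula of \Cref{lem:f-derivative}: by bilinearity of the covariance and the identity $\textup{Var}_{q_t}[g(u_t)]=B/c^f_{q_t,\pi}$ from the definition of $c^f_{q_t,\pi}$, one gets $\frac{d}{dt}J_f[\phi_t]=-(c^f_{q_t,\pi}/\sqrt B)\cdot\textup{Var}_{q_t}[g(u_t(z))]=-\sqrt B$, which is independent of $t$. The main obstacle I anticipate is the algebraic manipulation in the second paragraph: without the substitution coming from the very definition of the power mean, the ansatz appears to fail because $\tilde q_0^\alpha$ does not appear in $\eta^*$, so identifying the right elimination is essential. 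A minor bookkeeping subtlety is verifying that the case $\alpha=1$ (arithmetic mean with $f(u)=u\log u$) goes through identically with the same schedule formula, since $g(u)=u$ and $u^\alpha=u$.
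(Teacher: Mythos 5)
Your proof is correct and takes essentially the same approach as the paper: both arguments hinge on the observation that passing to $\tilde q_t(z)^\alpha$ linearizes the dynamics, the paper solving the resulting linear first-order ODE with an integrating factor and a suitable choice of $b(t)$, while you verify the power-mean ansatz directly and read off the same schedule $\beta^\alpha(t)$ by matching the coefficient of $\tilde\pi^\alpha/\tilde q_t^\alpha$. Your explicit check that $\frac{d}{dt}J_f[\phi_t]=-\sqrt{B}$ (via $\mathrm{Cov}_{q_t}[\eta^*,-g(u_t)]=-(c^f_{q_t,\pi}/\sqrt{B})\,\mathrm{Var}_{q_t}[g(u_t)]$) is a welcome addition, as the paper only asserts the constant-rate claim without computation.
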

\begin{proof}
    We prove the proposition for $\alpha\centernot\in\{0,1\}$ in the following which can be easly extended to the case with $\alpha=1$.
    Using the derivative of $f$ we have $g(u) = (u^\alpha - 1)/\alpha$, therefore,
    \begin{align}
        \eta^*_{q_t,\pi}(z) = \frac{c_{q_t,\pi}^f}{\alpha\sqrt B}(u_t^\alpha - 1) + b(t).\nonumber
    \end{align}
    In the ODE
    \begin{align}
        \frac{d}{dt}\log \tilde q(z) &= \eta_{q_t,\pi}^*(z) + b(t)  \nonumber\\
        &= \frac{c_{q_t,\pi}^f}{\alpha\sqrt B}\left( \left(\frac{\tilde \pi(z)}{\tilde q_t(z)}\frac{Z_{q_t}}{Z_\pi}\right)^\alpha - 1\right) + b(t).
    \end{align} 
    We use the change of variable $\tilde s_t(z) = \tilde q_t(z)^\alpha$ and rewrite the ODE as
    \begin{align}
        \frac{d}{dt}\tilde s_t(z) + \left(\frac{c_{q_t,\pi}^f}{\sqrt B} - \alpha b(t)\right)\tilde s_t(z) = \frac{c_{q_t,\pi}^f Z_{q_t}^\alpha\tilde \pi(z)^\alpha}{\sqrt B Z_\pi^\alpha},\nonumber
    \end{align}
    which has a solution of the form
    \begin{align}
        s_t(z) = \beta(t)\left[\tilde \pi(z)^\alpha\int_0^t \frac{c_{q_r,\pi}^f Z_{q_r}^\alpha}{\sqrt B Z_\pi^\alpha \beta(r)}dr + \tilde q_0(z)^\alpha\right]\nonumber
    \end{align}
    for 
    \begin{align}
        \beta(t) = e^{\int_0^t (\alpha b(r)- c_{q_r,\pi}^f/\sqrt B) dr}.\nonumber
    \end{align}
    Therefore, without loss of generality, we choose 
    \begin{align}
        b(t) = -c_{q_t,\pi}^f((Z_{q_t}/Z_\pi)^\alpha - 1)/\alpha\sqrt B\nonumber
    \end{align}
    and get the annealing sequence $\tilde q^{\alpha\text{-pow}}_{\tau(t)}(z)$ for the schedule $\tau(t) = 1 - \beta(t)$ and with $\beta(t)$ set to 
    \begin{align}\label{p:beta}
        \beta^\alpha(t) \coloneqq e^{-\int_0^t (c_{q_r,\pi}^fZ_{q_r}^\alpha/\sqrt BZ_\pi^\alpha)dr},
    \end{align}
    The $\alpha$-divergence decreases with a steady rate along the steepest descent direction.
\end{proof}

\section{2D Distributions Implementation Details}\label{sec:app:2d}

Here we give the implementation details of our experiments on 2d benchmark dataset. 
We initialize CR-AIS with a standard normal distribution for $q_0$ which is plotted in the top right corner of \Cref{fig:particles} for scale and we use the same value of $\delta=1/32$ for all the targets. Each AIS transition is a single Hamiltonian Monte Carlo (HMC) step with step size 0.5 and $N=1024$ particles are used to approximate the constant rate schedule. We abort sampling when the empirical variance $\widehat{\text{Var}}_{q_{i\delta}}[g(u)]$ is below $0.001$. We use $\alpha=0$ for the results in \Cref{sec:experiment1}. 

In \Cref{sec:experiment2}, for Adaptive AIS, we tune the schedule on each iteration using binary search with constrained maximum step size set to $1/128$ avoid large steps. This value is chosen to give the search algorithm sufficient flexibility to effect the schedule while larger values would result in big steps and premature annealing.

\section{Accuracy of the Approximations in CR-AIS}\label{sec:app:f}

To assess the accuracy of the approximations used in the CR-AIS algorithm, we depict the Monte Carlo estimation of the objective during a run of the algorithm for the same setup as \Cref{sec:experiment2} in \Cref{fig:f}. The curves indicate an almost static decrease of the inverse KL divergence and the efficiency of the approximations in CR-AIS with combination of self-normalized normalization factor ratio estimation and Riemann sum. 

\begin{figure}[h]
    \begin{center}
    \centerline{\includegraphics[width=0.3\textwidth]{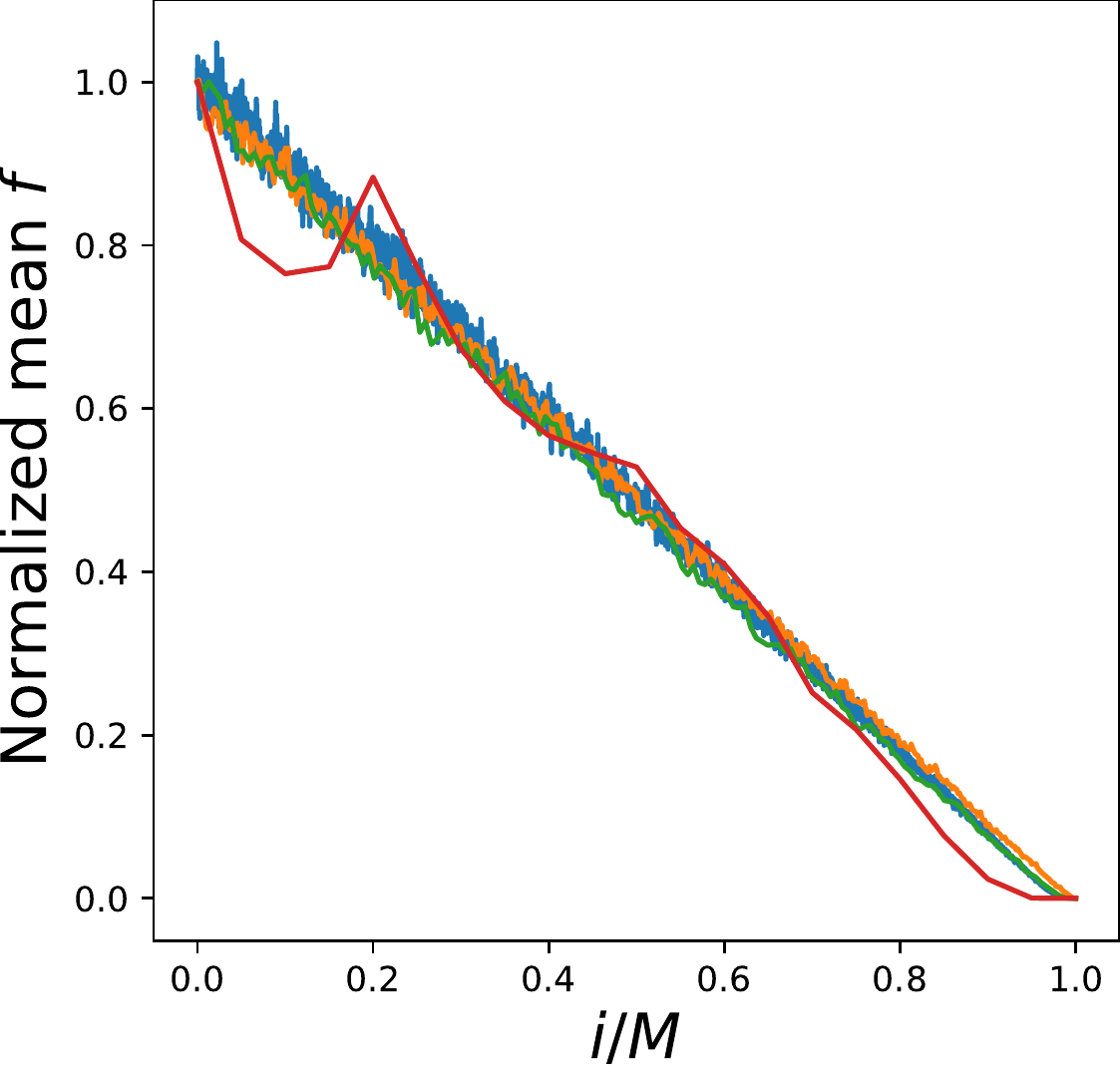}}
        \caption{Mean objective vs normalized AIS iterations in four distributions used in \Cref{sec:experiment2}, with geometric mean path and $\delta = 1/32$. Mean $f$ is normalized by the initial divergence to fit in the same plot. Best seen in color.}
    \label{fig:f}
    \end{center}
    \vskip -0.2in
\end{figure}

\section{Constant Rate Sequential Monte Carlo Experiments}\label{sec:app:smc}

In this section, we adapt Sequential Monte Carlo (SMC) algorithm using the constant rate schedule and compare the performance of sampling to Adaptive SMC.
SMC is a modification of AIS with occasional resampling steps to prevent weight degeneracy problem in sequential importance sampling algorithms. In practice, the resampling is performed periodically or adaptively whenever ESS falls below a threshold, the latter providing more sample diversity.

To adapt the schedule in SMC we used weighted estimates of log normalization factor ratio with the SMC importance weights (line 7-9). The rest of the algorithm is similar to \Cref{algo} with the additional branch for resampling after line 18 to duplicate the effective samples. We perform resampling adaptively when ESS is less than 0.9 of the last time resampling was performed.

We construct 4 randomly generated wide Gaussian mixtures uniformly sampling the mean of the components from $[-15, -15]$ and compare the synthesized particles to Adaptive SMC where the schedule is determined with binary search on each iteration to achieve constant rate decrease of $0.9$ in ESS. \Cref{fig:smc} shows the sampled particles from CR-SMC and Adaptive AIS starting from standard normal distribution with similar HMC transitions. Adapative SMC particles tend to group together in a few of the targets' modes, while with CR-SMC the final samples have higher diversity and reach all the target modes in three of the four distributions. This happens despite the fact that CR-SMC uses shorter annealing sequences in all of the distributions.

\begin{figure}[h]
    \centering
        \centerline{\includegraphics[width=1.\textwidth]{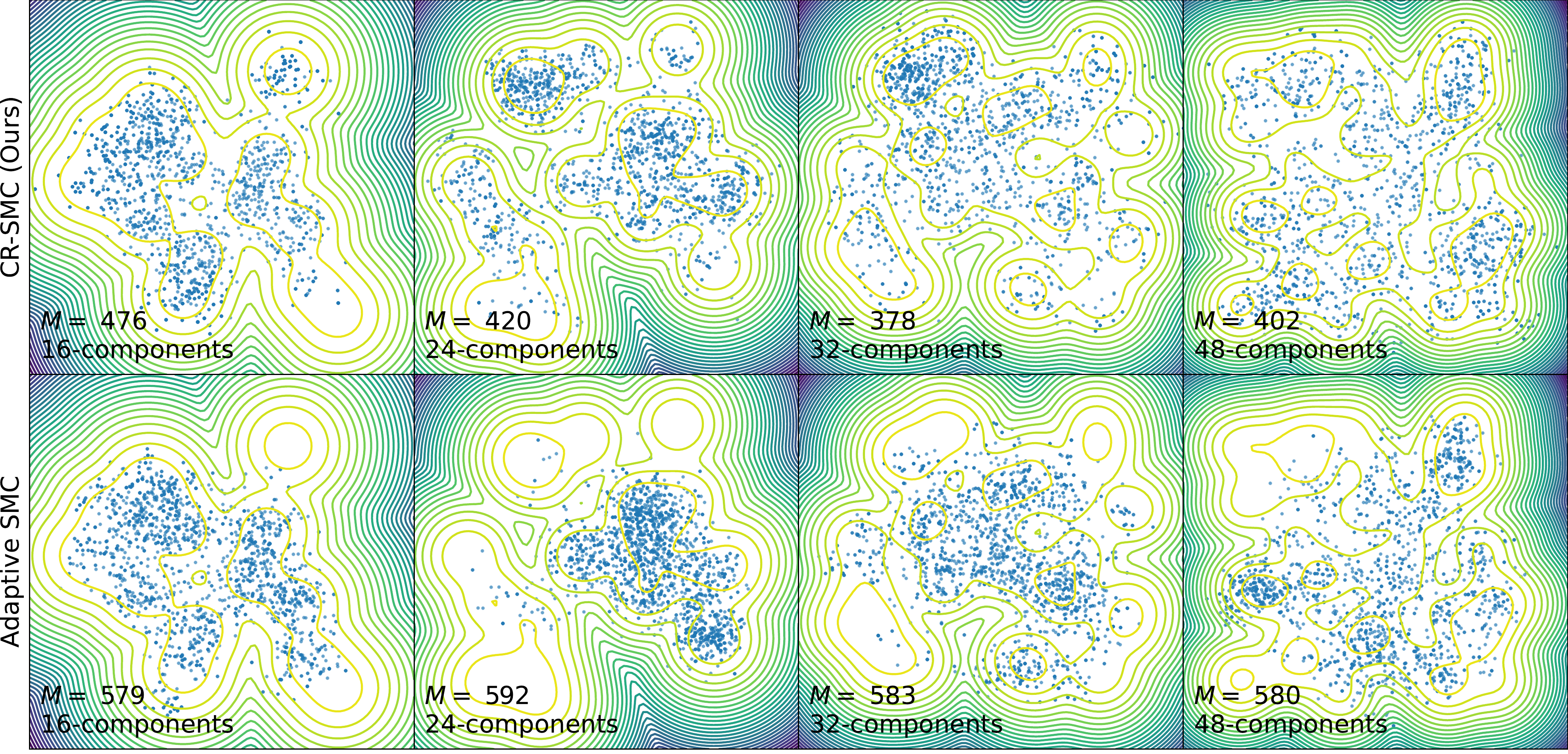}}
        \caption{Contour lines of four wide mixture distributions in [-20,20] in each column with samples (blue discs) from CR-SMC (top) and Adaptive SMC (bottom). geometric mean path with $\delta=1/128$ selected to have roughly the same number of iterations as Adaptive SMC which has worse coverage.}
        \label{fig:smc}
\end{figure}

In \Cref{tab:smc} we report the $\log Z_\pi$ estimations with both algorithms on the four targets.

\begin{table}[h]
    \caption{Estimated $\log Z$ of four gaussian mixtures with the same order as Figure~\ref{fig:smc} with 16,24,32 and 48 components using CR-SMC and Adaptive SMC with 0.9 CESS drop resampling and 0.9 ESS drop schedule adaptation. Lower estimation error is in bold.} \label{tab:smc}
    \begin{center}
    \begin{small}
    \begin{sc}
        \sisetup{
      table-align-uncertainty=true,
      separate-uncertainty=true,
    }
    \renewrobustcmd{\bfseries}{\fontseries{b}\selectfont}
    \renewrobustcmd{\boldmath}{}
    \setlength\tabcolsep{5pt}
    
    \begin{tabular}{|c|cccc|}
    \hline
         & Target 1 & Target 2 & Target 3 & Target 4 \\
         $\log Z$ & 5.997 &6.402 & 6.690 & 7.095\\
    \hline
         Ada 0.9 & 5.387 & \bfseries{6.285} & 5.891 & 6.808\\
         CR-SMC (Ours) & \bfseries{5.967} & 6.125 & \bfseries{6.634} & \bfseries{6.974}\\
    \hline
    \end{tabular}
    \end{sc}
    \end{small}
    \end{center}
\end{table}

\section{High dimensional Experiments}\label{sec:app:gm}

We use the same hyperparameters as in \Cref{sec:app:2d} for the log normalization factor estimation experiments with the following modifications. We set $N=4096$ and chose $\alpha$ from a grid in $\{-0.5, 0.0, ..., 2.\}$, $\delta $ from $\{1/256, ..., 4096\}$ for CR-AIS depending on the target such that sampling budget $M$ will be $M\leq64$ for interpolation and close to $M=64$ without interpolation.
To generate the results with interpolated schedules, we interpolate the tuned schedules with each $\delta$ to match the test-time computation of all samplers to $M=64$ and sample from the target distribution. 
We report the $\log 1/N \sum w(z_{0:M}^j)$ with smallest estimation error over $\alpha$ and $\delta$. In order to avoid the large steps in Adaptive AIS when ESS is underestimating the variance, we choose a maximum step size in the adaptive AIS from $\{1/512, ..., 1/4\}$ using cross validation in the same manner as CR-AIS. Each algorithm is run with 5 different seeds and the average absolute estimation error is reported.

Below we present more thorough comparison of CR-AIS with adaptive and heuristic AIS benchmarks. The details of the experiment are similar to the setup in Section{sec:experiment3} with additional variations of Adaptive AIS algorithm. Specifically, in Adaptive AIS, we tune the schedule on each iteration using binary search with constrained maximum step size set to avoid large steps and apply proposed steps which decrease Effective Sample Size (ESS) and Conditional ESS (CESS) \cite{johansen2015towards} constantly with ratio in $\{0.5, 0.6, 0.7, 0.8, 0.9\}$.

\begin{table*}[h]
\caption{Absolute $\log Z_\pi$ estimation error for 128-dimensional distributions: Normal $\mathcal N(0, 0.01I)$, Gaussian mixtures with 8 components of variance 1, standard Laplace and Student-T distribution with 3 degrees of freedom with $M$ close to 64. Results are cross validated over different values of $\alpha$. Smallest error is in bold.}\label{tab:logZ}
    \begin{center}
    \begin{small}
    \begin{sc}
    \sisetup{table-align-uncertainty=true, separate-uncertainty=true,}
    \renewrobustcmd{\bfseries}{\fontseries{b}\selectfont}
    \renewrobustcmd{\boldmath}{}
    \setlength\tabcolsep{2pt}
    \begin{tabular}{|c|cc|cc|cc|cc|}
    \hline
        \multirow{2}{*}{$\beta$} & \multicolumn{2}{c|}{Normal} & \multicolumn{2}{c|}{Mixture} & \multicolumn{2}{c|}{Laplace} & \multicolumn{2}{c|}{Student-T} \\
            & Est. err. & $M$ & Est. err. & $M$ & Est. err. & $M$ & Est. err. & $M$ \\
    \hline
Ada. 0.9c & 871.75 $\pm$ {\fontsize{6}{7.2}\selectfont 25.81} & 69.8 & 246.08 $\pm$ {\fontsize{6}{7.2}\selectfont 33.20} & 69.2 & 0.80 $\pm$ {\fontsize{6}{7.2}\selectfont 0.65} & 43.2 & 0.96 $\pm$ {\fontsize{6}{7.2}\selectfont 0.66} & 50.0\\
Ada. 0.6c & 853.39 $\pm$ {\fontsize{6}{7.2}\selectfont 36.87} & 61.8 & 268.89 $\pm$ {\fontsize{6}{7.2}\selectfont 21.24} & 61.4 & 0.45 $\pm$ {\fontsize{6}{7.2}\selectfont 0.75} & 39.6 & 1.37 $\pm$ {\fontsize{6}{7.2}\selectfont 0.24} & 43.2\\
Ada. 0.7c & 864.52 $\pm$ {\fontsize{6}{7.2}\selectfont 31.07} & 61.6 & 253.38 $\pm$ {\fontsize{6}{7.2}\selectfont 39.73} & 70.0 & 0.12 $\pm$ {\fontsize{6}{7.2}\selectfont 0.55} & 69.8 & 1.13 $\pm$ {\fontsize{6}{7.2}\selectfont 0.39} & 43.6\\
Ada. 0.8c & 879.29 $\pm$ {\fontsize{6}{7.2}\selectfont 22.60} & 63.2 & 252.01 $\pm$ {\fontsize{6}{7.2}\selectfont 33.43} & 69.0 & 0.74 $\pm$ {\fontsize{6}{7.2}\selectfont 0.97} & 40.8 & 0.96 $\pm$ {\fontsize{6}{7.2}\selectfont 1.75} & 11.4\\
Ada. 0.9 & 992.07 $\pm$ {\fontsize{6}{7.2}\selectfont 33.45} & 66.4 & 347.07 $\pm$ {\fontsize{6}{7.2}\selectfont 29.13} & 43.0 & \bfseries{0.04} $\pm$ {\fontsize{6}{7.2}\selectfont 0.77} & 44.8 & 1.00 $\pm$ {\fontsize{6}{7.2}\selectfont 1.25} & 42.0\\
Ada. 0.6 & 1001.14 $\pm$ {\fontsize{6}{7.2}\selectfont 72.16} & 65.6 & 384.80 $\pm$ {\fontsize{6}{7.2}\selectfont 24.05} & 36.8 & 0.72 $\pm$ {\fontsize{6}{7.2}\selectfont 0.50} & 37.2 & 0.92 $\pm$ {\fontsize{6}{7.2}\selectfont 0.83} & 37.2\\
Ada. 0.7 & 986.59 $\pm$ {\fontsize{6}{7.2}\selectfont 53.15} & 65.8 & 378.61 $\pm$ {\fontsize{6}{7.2}\selectfont 23.42} & 38.0 & 0.54 $\pm$ {\fontsize{6}{7.2}\selectfont 0.62} & 39.4 & 1.56 $\pm$ {\fontsize{6}{7.2}\selectfont 0.29} & 41.2\\
Ada. 0.8 & 987.17 $\pm$ {\fontsize{6}{7.2}\selectfont 29.88} & 66.8 & 357.53 $\pm$ {\fontsize{6}{7.2}\selectfont 22.14} & 40.6 & 0.15 $\pm$ {\fontsize{6}{7.2}\selectfont 0.70} & 42.0 & 1.66 $\pm$ {\fontsize{6}{7.2}\selectfont 0.14} & 39.8\\
Lin. & 991.90 $\pm$ {\fontsize{6}{7.2}\selectfont 69.87} & 64.0 & \bfseries{230.65} $\pm$ {\fontsize{6}{7.2}\selectfont 6.16} & 64.0 & 0.36 $\pm$ {\fontsize{6}{7.2}\selectfont 0.22} & 64.0 & 1.37 $\pm$ {\fontsize{6}{7.2}\selectfont 0.19} & 64.0\\
Sigm. & 907.57 $\pm$ {\fontsize{6}{7.2}\selectfont 24.04} & 64.0 & 270.55 $\pm$ {\fontsize{6}{7.2}\selectfont 6.42} & 64.0 & 0.07 $\pm$ {\fontsize{6}{7.2}\selectfont 1.14} & 64.0 & 1.46 $\pm$ {\fontsize{6}{7.2}\selectfont 0.14} & 64.0\\
Exp. & \bfseries{780.58} $\pm$ {\fontsize{6}{7.2}\selectfont 40.02} & 64.0 & 507.43 $\pm$ {\fontsize{6}{7.2}\selectfont 15.04} & 64.0 & 1.21 $\pm$ {\fontsize{6}{7.2}\selectfont 0.51} & 64.0 & 2.04 $\pm$ {\fontsize{6}{7.2}\selectfont 0.31} & 64.0\\
\hline
CR-AIS (Ours) & 788.67 $\pm$ {\fontsize{6}{7.2}\selectfont 36.25} & 39.0 & 308.82 $\pm$ {\fontsize{6}{7.2}\selectfont 6.81} & 56.0 & 0.68 $\pm$ {\fontsize{6}{7.2}\selectfont 0.31} & 58.6 & \bfseries{0.69} $\pm$ {\fontsize{6}{7.2}\selectfont 0.27} & 49.2\\
    \hline
    \end{tabular}
    \end{sc}
    \end{small}
    \end{center}
\end{table*}

\begin{table*}[h]
\caption{Absolute $\log Z_\pi$ estimation error for 512-dimensional distributions: Normal $\mathcal N(0, 0.01I)$, Gaussian mixtures with 8 components of variance 1, standard Laplace and Student-T distribution with 3 degrees of freedom with $M$ close to 64. Results are cross validated over different values of $\alpha$. Smallest error is in bold.}\label{tab:logZ}
    \begin{center}
    \begin{small}
    \begin{sc}
    \sisetup{table-align-uncertainty=true, separate-uncertainty=true,}
    \renewrobustcmd{\bfseries}{\fontseries{b}\selectfont}
    \renewrobustcmd{\boldmath}{}
    \setlength\tabcolsep{2pt}
    \begin{tabular}{|c|cc|cc|cc|cc|}
    \hline
        \multirow{2}{*}{$\beta$} & \multicolumn{2}{c|}{Normal} & \multicolumn{2}{c|}{Mixture} & \multicolumn{2}{c|}{Laplace} & \multicolumn{2}{c|}{Student-T} \\
            & Est. err. & $M$ & Est. err. & $M$ & Est. err. & $M$ & Est. err. & $M$ \\
    \hline
Ada. 0.9c & 5163.14 $\pm$ {\fontsize{6}{7.2}\selectfont 217.21} & 54.8 & 1147.02 $\pm$ {\fontsize{6}{7.2}\selectfont 136.34} & 65.6 & 8.30 $\pm$ {\fontsize{6}{7.2}\selectfont 1.34} & 67.4 & 10.39 $\pm$ {\fontsize{6}{7.2}\selectfont 2.15} & 40.8\\
Ada. 0.6c & 5181.95 $\pm$ {\fontsize{6}{7.2}\selectfont 232.09} & 53.4 & 1147.02 $\pm$ {\fontsize{6}{7.2}\selectfont 142.66} & 66.4 & 9.12 $\pm$ {\fontsize{6}{7.2}\selectfont 3.01} & 66.4 & 9.72 $\pm$ {\fontsize{6}{7.2}\selectfont 2.41} & 37.2\\
Ada. 0.7c & 5297.43 $\pm$ {\fontsize{6}{7.2}\selectfont 188.64} & 56.4 & 1170.21 $\pm$ {\fontsize{6}{7.2}\selectfont 104.98} & 66.4 & 7.01 $\pm$ {\fontsize{6}{7.2}\selectfont 3.31} & 64.0 & 10.66 $\pm$ {\fontsize{6}{7.2}\selectfont 0.83} & 39.2\\
Ada. 0.8c & 5195.95 $\pm$ {\fontsize{6}{7.2}\selectfont 105.26} & 55.8 & 1119.56 $\pm$ {\fontsize{6}{7.2}\selectfont 79.00} & 69.6 & 8.27 $\pm$ {\fontsize{6}{7.2}\selectfont 3.26} & 64.4 & 9.92 $\pm$ {\fontsize{6}{7.2}\selectfont 1.65} & 38.6\\
Ada. 0.9 & 5107.72 $\pm$ {\fontsize{6}{7.2}\selectfont 142.41} & 66.0 & 1662.46 $\pm$ {\fontsize{6}{7.2}\selectfont 94.87} & 39.0 & 9.64 $\pm$ {\fontsize{6}{7.2}\selectfont 2.18} & 40.8 & 9.97 $\pm$ {\fontsize{6}{7.2}\selectfont 1.01} & 40.0\\
Ada. 0.6 & 5203.06 $\pm$ {\fontsize{6}{7.2}\selectfont 115.42} & 65.2 & 1033.74 $\pm$ {\fontsize{6}{7.2}\selectfont 42.08} & 69.2 & 7.14 $\pm$ {\fontsize{6}{7.2}\selectfont 2.52} & 69.8 & \bfseries{7.83} $\pm$ {\fontsize{6}{7.2}\selectfont 1.26} & 69.4\\
Ada. 0.7 & 5190.67 $\pm$ {\fontsize{6}{7.2}\selectfont 112.92} & 65.2 & 1095.46 $\pm$ {\fontsize{6}{7.2}\selectfont 30.55} & 70.0 & 10.01 $\pm$ {\fontsize{6}{7.2}\selectfont 3.15} & 36.2 & 10.15 $\pm$ {\fontsize{6}{7.2}\selectfont 1.36} & 36.4\\
Ada. 0.8 & 5151.53 $\pm$ {\fontsize{6}{7.2}\selectfont 87.72} & 65.6 & 1266.31 $\pm$ {\fontsize{6}{7.2}\selectfont 67.74} & 69.8 & 9.65 $\pm$ {\fontsize{6}{7.2}\selectfont 1.79} & 38.6 & 9.94 $\pm$ {\fontsize{6}{7.2}\selectfont 1.54} & 39.4\\
Lin. & 5279.55 $\pm$ {\fontsize{6}{7.2}\selectfont 79.56} & 64.0 & 1087.38 $\pm$ {\fontsize{6}{7.2}\selectfont 22.68} & 64.0 & 9.03 $\pm$ {\fontsize{6}{7.2}\selectfont 0.77} & 64.0 & 8.16 $\pm$ {\fontsize{6}{7.2}\selectfont 0.45} & 64.0\\
Sigm. & 4757.13 $\pm$ {\fontsize{6}{7.2}\selectfont 59.34} & 64.0 & \bfseries{1022.60} $\pm$ {\fontsize{6}{7.2}\selectfont 23.55} & 64.0 & \bfseries{6.95} $\pm$ {\fontsize{6}{7.2}\selectfont 2.48} & 64.0 & 8.71 $\pm$ {\fontsize{6}{7.2}\selectfont 1.24} & 64.0\\
Exp. & 4435.05 $\pm$ {\fontsize{6}{7.2}\selectfont 110.36} & 64.0 & 1683.21 $\pm$ {\fontsize{6}{7.2}\selectfont 13.73} & 64.0 & 13.17 $\pm$ {\fontsize{6}{7.2}\selectfont 0.40} & 64.0 & 13.98 $\pm$ {\fontsize{6}{7.2}\selectfont 1.08} & 64.0\\
\hline
CR-AIS (Ours) & \bfseries{4413.95} $\pm$ {\fontsize{6}{7.2}\selectfont 95.75} & 64.8 & 1200.12 $\pm$ {\fontsize{6}{7.2}\selectfont 27.12} & 70.0 & 8.75 $\pm$ {\fontsize{6}{7.2}\selectfont 1.84} & 55.4 & 8.40 $\pm$ {\fontsize{6}{7.2}\selectfont 1.28} & 49.2\\
    \hline
    \end{tabular}
    \end{sc}
    \end{small}
    \end{center}
\end{table*}

\section{Bayesian Logistic Regression Details}\label{sec:app:bayes}

For the Bayesian logistic regression experiment we have the following hyperparameters.
We vary $\delta$ between $\{64, ..., 2048\}$ for CR-AIS and maximum step size in the range  $\{1/65536, ..., 1/2048\}$ for Adaptive AIS with $N=256$ particles. We chose $\alpha$ from a grid in $[-0.5, 2.0]$ for each value of $\delta$ and maximum step size separately using cross validation on the highest estimated log marginal likelihood. 

We use standard normal distribution for $q_0$ and 1-step HMC transitions as before with step size 0.5. Although the transition kernels are considerably simple for a problem of moderate dimensions, it is chosen to compare the adaptability of the algorithms to the posterior distribution. In fact, an annealing sequence with larger $M$ and simple HMC transitions although has the same amount of computations, is more flexible than a shorter annealing sequence with larger number of MCMC steps per transition. 

We use Adaptive AIS with ESS decrease rate of 0.5.

The adapted CR-AIS schedule for different values of $\delta$ is illustrated in \Cref{fig:pima_app}(Right) and \Cref{fig:sonar_app}(Right) for Pima and Sonar datasets. As expected, the discretization schedule has a similar pattern for different values of $\delta$. It is possible to exploit this property principally to increase the computation efficiency by interpolating the schedule obtained from tuning with a large $\delta$ and elongating the annealing sequence to reach the desired $M$ for the final estimation with negligible impact on the performance. 

\begin{figure}[h]
    \begin{center}
    \centerline{\includegraphics[width=0.45\textwidth]{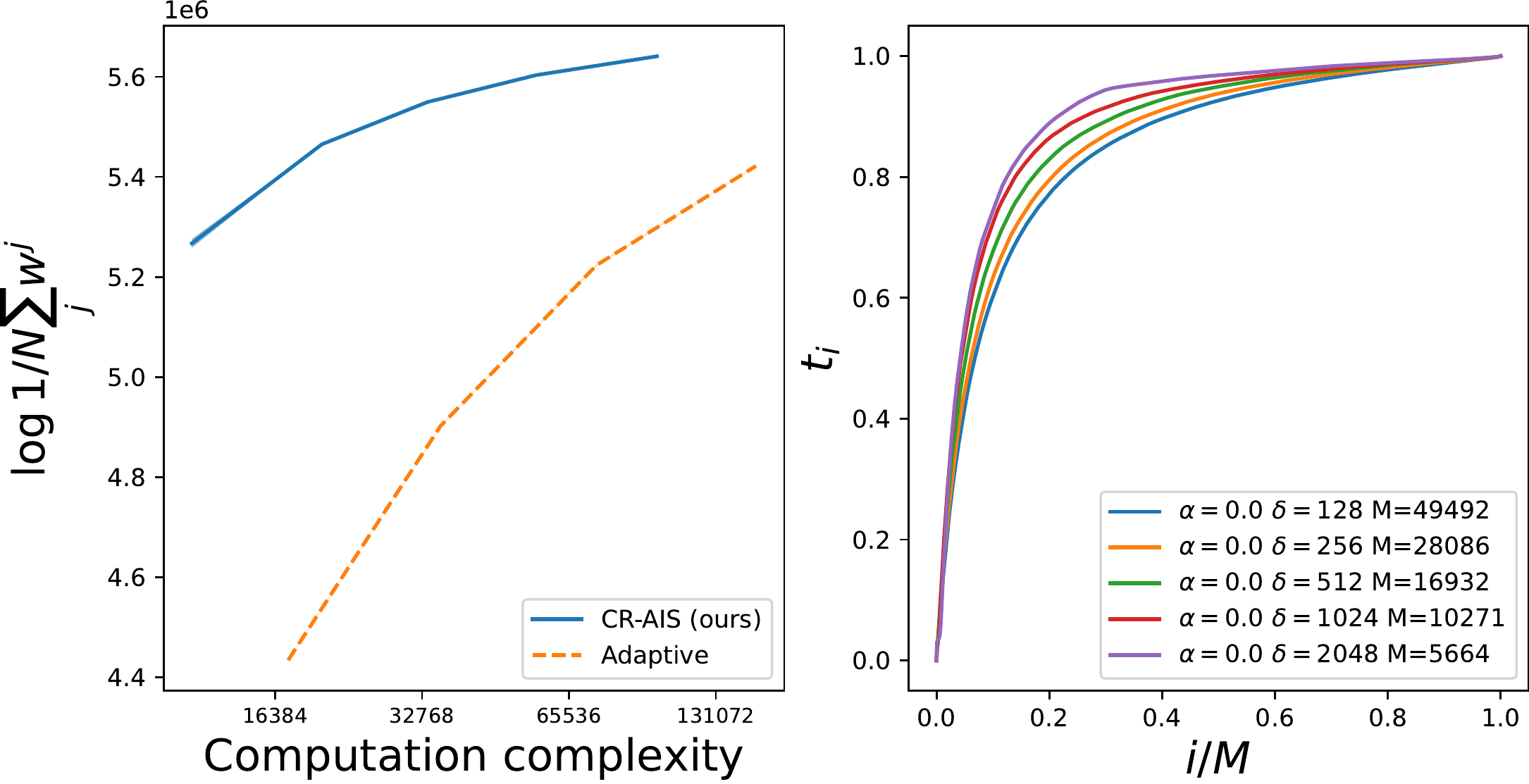}}
        \caption{(Left) Log marginal likelihood estimates vs computation complexity for CR-AIS and Adaptive AIS. (Right) CR-AIS discretization schedule for different values of $\delta$ on Bayesian logistic regression model of Pima dataset.}
    \label{fig:pima_app}
    \vskip -0.2in
    \end{center}
\end{figure}

\begin{figure}[h]
    \begin{center}
    \centerline{\includegraphics[width=0.45\textwidth]{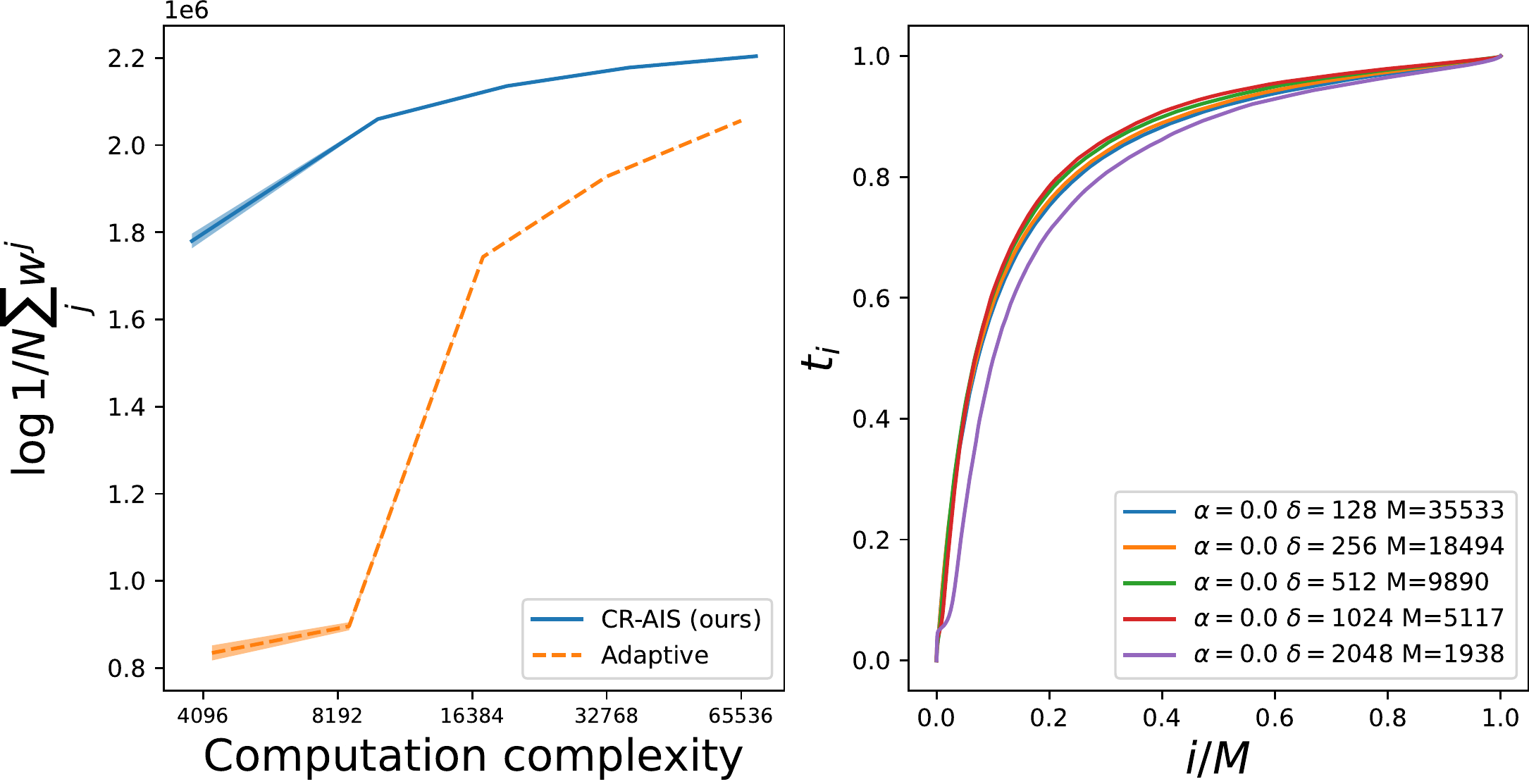}}
        \caption{(Left) Log marginal likelihood estimates vs computation complexity for CR-AIS and Adaptive AIS. (Right) CR-AIS discretization schedule for different values of $\delta$ on Bayesian logistic regression model of Sonar dataset.}
    \label{fig:sonar_app}
    \vskip -0.2in
    \end{center}
\end{figure}

\end{document}